\newtheorem*{prop*}{Proposition}
\newtheorem{proposition}{Proposition}
\newtheorem{theorem}{Theorem}
\newtheorem*{theorem*}{Theorem}
\newtheorem{lemma}{Lemma}
\newtheorem*{lemma*}{Lemma}
\newtheorem*{property*}{Property}
\newtheorem*{remark}{Remark}
\newtheorem{definition}{Definition}
\newtheorem*{definition*}{Definition}
\newtheorem{corollary}{Corollary}
\newtheorem*{corollary*}{Corollary}
\newtheorem*{assumption*}{Assumption}
\newtheorem{example}{Example}
\title{Disentangle Sample Size and Initialization Effect on Perfect Generalization for Single-Neuron Target}
\author{
Jiajie Zhao\textsuperscript{\rm 1}, Zhiwei Bai\textsuperscript{\rm 1}, 
Yaoyu Zhang\textsuperscript{\rm 1,2}\thanks{Corresponding author: zhyy.sjtu@sjtu.edu.cn.}\\
\textsuperscript{\rm 1}  School of Mathematical Sciences, Institute of Natural Sciences, MOE-LSC, \\ Shanghai Jiao Tong University, Shanghai 200240, P.R. China. \\
\textsuperscript{\rm 2} Shanghai Center for Brain Science and Brain-Inspired Technology, Shanghai 200240, P.R. China\\
\{zjj0216, bai299, zhyy.sjtu\}@sjtu.edu.cn.
}
\begin{document}

\maketitle
\begin{abstract}
Overparameterized models like deep neural networks have the intriguing ability to recover target functions with fewer sampled data points than parameters~\citep{zhang2023optimistic}. To gain insights into this phenomenon, we concentrate on a single-neuron target recovery scenario, offering a systematic examination of how initialization and sample size influence the performance of two-layer neural networks.
Our experiments reveal that a smaller initialization scale is associated with improved generalization, and we identify a critical quantity called the "initial imbalance ratio" that governs training dynamics and generalization under small initialization, supported by theoretical proofs. Additionally, we empirically delineate two critical thresholds in sample size—termed the "optimistic sample size" and the "separation sample size"—that align with the theoretical frameworks established by~\cite{zhang2023optimistic,zhang2023structure}. Our results indicate a transition in the model's ability to recover the target function: below the optimistic sample size, recovery is unattainable; at the optimistic sample size, recovery becomes attainable albeit with a set of initialization of zero measure. Upon reaching the separation sample size, the set of initialization that can successfully recover the target function shifts from zero to positive measure. These insights, derived from a simplified context, provide a perspective on the intricate yet decipherable complexities of perfect generalization in overparameterized neural networks.
\end{abstract}

\section{Introduction}
In machine learning, a fundamental problem is to learn a function from data sampled from a target function $f^*$ with the goal of minimizing the generalization error. Traditional learning theory suggests that overparameterized models, where the number of parameters exceeds the number of sample points, are prone to overfitting and poor generalization ~\citep{vapnik1998adaptive,bartlett2002rademacher}. However, in practice, overparameterized deep neural networks often exhibit good generalization performance~\citep{breiman2018reflections,zhang2021understanding}. To demystify this generalization phenomenon, researchers have sought to devise theoretical complexity measures to determine an upper bound on the generalization gap. Many proposed complexity measures are predicated on worst-case analyses, assessing the most unfavorable generalization scenarios within a given hypothesis space. Nonetheless, empirical investigations often show a weak or nonexistent relationship between these theoretical predictors and the observed generalization performance of actual models~\citep{jiang2019fantastic}.

Recent research has explored a novel concept contrary to the worst-case scenario, referred to as the "optimistic estimate". This investigates the minimum number of samples that models need to exactly reconstruct the target function in the recoverable setting~\citep{zhang2022linear, zhang2023optimistic}. Their experimental findings indicate that with appropriate hyperparameter tuning, the number of data points required to recover the target function can approach, or even match, the proposed "optimistic sample size". Furthermore,~\citet{zhang2023structure} characterized the structure of the loss landscape of two-layer neural networks near global minima. They discovered that as the sample size reaches a certain threshold, called the "separation sample size", the set of parameters with zero generalization error $Q^*$, referred to as the target set, separates out (see Definition~\ref{def:separation} for the formal definition of separation). However, the target set $Q^*$ generally consists of different branches, and it remains unclear to which branch the actual training process will converge for different sample sizes and initialization.

In our study, we conduct a systematic exploration of the impact that initialization and sample size have on the dynamics and convergence results of a model. The challenge in studying neural network dynamics stems from its dependency on multiple factors, including the specific architecture, dataset, optimization technique, and initialization method. To dissect the global dynamics and generalization capabilities within overparameterized neural networks, we concentrate on a simplified scenario: the recovery of a single-neuron target. In our context, term "recovery" and "perfect generalization" are both identical to zero generalization error. Despite its simplicity, this scenario still represents an overparameterized system, and an in-depth examination can provide insights on more intricate situations. Our principal conclusions are encapsulated as follows:

\textbf{Effect of Initialization Scale:} Within the context of single-neuron target recovery, we experimentally demonstrate that  smaller initialization scales are conducive to enhanced generalization.

\textbf{Effect of Randomness:} Randomness retains its significance even as the initialization scale nears zero; we pinpoint a critical variable, termed the "initial imbalance ratio", which serves as a determinant of the training dynamics and generalization error.

\textbf{Effect of Sample Size:} Our empirical results highlight two critical thresholds in sample size—the optimistic sample size and the separation sample size—that align with theoretical forecasts by \citet{zhang2023optimistic,zhang2023structure}. Specifically, we empirically establish that:
   \begin{enumerate}[(i)]
       \item Below optimistic sample size, the model cannot recover the target function.
       \item At optimistic sample size, a zero-measure subset of initialization can recover the target function.
       \item Once the sample size reaches the separation sample size, there exists a non-zero probability that certain combinations of initialization and sampling will successfully recover the target function.
       \item When sample size equals the number of parameters, all small-scale initialization can recover the target function.
   \end{enumerate}

\section{Related works}

The single-neuron fitting problem has been extensively studied, with various works investigating the convergence properties of networks in both exactly parameterized and overparameterized settings~\citep{yehudai2020learning, vardi2021learning, xu2023over, Vempala2018PolynomialCO}. These works have established results on the convergence rates and conditions for neural networks, laying a theoretical groundwork for discussions on generalization. When it comes to generalization, several studies have derived polynomial generalization bounds~\citep{pmlr-v151-wu22c,frei2020agnostic}, while others have presented theoretical results of implicit regularization~\citep{chistikov2024learning, oymak2019overparameterized, safran2022effective}. However, these analyses are typically restricted to the exactly parameterized setting or are specific to the ReLU activation function. In this paper, we empirically examine generalization enigmas in overparameterized networks with analytic activation functions. We demonstrate that perfect generalization is attainable for a certain sample size within the single-neuron target framework, offering a more nuanced characterization of generalization than the polynomial generalization bounds previously reported.


Recent theoretical advancements by~\citet{zhang2023optimistic} and~\citet{zhang2022linear} introduced an optimistic estimate framework for general nonlinear models, suggesting that above a certain "optimistic sample size," some global minima become locally linearly stable, thereby allowing initializations close to these points to converge to stable solutions. Furthermore, \citet{zhang2023structure} delved into the branch structure of global minima in two-layer neural networks, defining a "separation sample size." Despite the theoretical importance of these findings, empirical validation has been limited. Our study aims to bridge this gap by providing a systematic empirical investigation of how initialization and sample size influence the actual dynamics and convergence outcomes in neural network models.

\section{Preliminaries}
\label{sec:Preliminaries}
\subsection{Notations}
  

In this paper, we investigate a two-layer fully connected neural network represented by $f_{\vtheta}(\vx)=\sum_{i=1}^m a_i\sigma(\vw_i^{\top} \vx)$, where $\vx \in \mathbb{R}^d$ and $\vtheta=(a_1,\vw_1,a_2,\vw_2,\ldots, a_m,\vw_m) \in \mathbb{R}^{(d+1)m}$. The function $\sigma:$ $\sR\to \sR$ denotes the activation function, and $m$ represents the width of the network. 
The target function we aim to approximate is a single-neuron function $f^*(\vx)=a_0\sigma(\vw_0^{\top}\vx)$. The dataset $(\vx_i,y_i)_{i=1}^n$ is generated by sampling from the target function $f^*$, that is, $y_i=f^*(\vx_i)$ for $i=1,2,\ldots,n$. We define the loss function as $\ell(\vtheta)=\frac{1}{2}\sum_{i=1}^n(f_{\vtheta}(\vx_i)-y_i)^2$. 





\subsection{Optimistic sample size and separation sample size}

The \emph{target set}, denoted by $Q^*$, is defined as the set of parameters that achieve perfect generalization:
\begin{equation*}
    Q^* := \{\vtheta \mid f_{\vtheta}(\vx) = f^*(\vx), \forall \vx \in \mathbb{R}^d \}.
\end{equation*}

\cite{zhang2023structure} classified $Q^*$ into several affine subspaces for a two-layer neural network without a bias term. We illustrate this with Example~\ref{example1}, where $Q^*$ is the union of two affine spaces. 

\begin{example}
Consider a neural network model $f_{\vtheta}(x) = a_1\tanh(w_1x+b_1) + a_2\tanh(w_2x+b_2)$, where $x\in\sR$. Let the target function be $f^*(x) = a_0\tanh(w_0x+b_0)$. We define:
\begin{align*}
    Q^1:&= \left\{\vtheta \mid (w_1,b_1)=(w_2,b_2)=(w_0,b_0),a_1 + a_2 = a_0\right\} \\
Q^2 :&= \left\{\vtheta \mid (w_1,b_1) = (w_0,b_0), (w_2,b_2)\neq (w_0,b_0), a_1 = a_0,  a_2 = 0\right\}
\end{align*}
Treating parameters symmetric about the origin as identical and the interchange of the two neurons as identical, we have $Q^1 \cup Q^2 = Q^*$(See Figure~\ref{plot Q n=6} for geometric structure of $Q^1$ and $Q^2$).
\label{example1}
\end{example}


\begin{definition}[Separation of $Q^k$]
A set $Q^k$ is said to be separated if there exists an open neighborhood $M$ around $Q^k$ such that $M \cap \ell^{-1}(0) = Q^k \cap \ell^{-1}(0)$, where $\ell^{-1}(0)$ is the set of global minima. The minimum number of samples required for $Q^k$ to be separated is referred to as the "separation sample size".
\label{def:separation}
\end{definition}

Definition~\ref{def:separation} introduces the concept of separation and separation sample size. \cite{zhang2023structure} proves that (i) the separation sample sizes of $Q^1$ and $Q^2$ are $4$ and $5$, respectively, and (ii) when $n=6$, $Q^*=\ell^{-1}(0)$.


\cite{zhang2023optimistic} proposed the concept of the "optimistic sample size," which determines the minimum number of samples required to achieve zero generalization error of a target function. In Example~\ref{example1}, the optimistic sample size is $3$. Table~\ref{table 1} summarizes the various sample sizes of Example~\ref{example1}.


\begin{table}[ht]
\begin{center}
\begin{tabular}{cc} 
\toprule
sample size $n$ &  Name\\ 
\midrule
$n=3$ & optimistic sample size \\ 
$n=4$ & separation sample size of $Q^2$ \\ 
$n=5$ & separation sample size of $Q^1$\\
$n=6$ & $Q^*=\ell^{-1}(0)$\\
\bottomrule
\end{tabular}
\end{center}
\caption{Different sample sizes in Example~\ref{example1}\citep{zhang2023optimistic,zhang2023structure}
}
\label{table 1}
\end{table}

\subsection{Experimental setup}
Our methodology involves sampling a set of data points from the target function $f^*$ and train the network $f_{\vtheta}$ until the parameters converge to $\vtheta_{\infty}$. To evaluate the training effectiveness, we measure the $L_2$ distance between $f^*$ and the learned function $f_{\vtheta_{\infty}}$. The ideal outcome is that $f^* = f_{\vtheta_{\infty}}$, a condition we term "recovery" or "perfect generalization". 

For training, we employ gradient descent with the update rule $\vtheta_{n+1} = \vtheta_{n} - \eta \nabla \ell(\vtheta_n)$, using a fixed learning rate $\eta$. Parameters are initialized according to a Gaussian distribution with mean vector $\mathbf{0}$ and covariance matrix $\sigma I$, where $I$ denotes the identity matrix. The standard deviation $\sigma$ is referred to as the "initialization scale". We utilize a random seed to generate the Gaussian distribution, with each seed uniquely identified by a corresponding number.

\section{Effect of initialization scale}

Our experimental findings suggest a relationship between a small initialization scale of network's parameters and a lower generalization error. Figures \ref{n=2} to \ref{n=6} depict the generalization error across various initialization. The scale of initialization is represented by $\sigma$ on the x-axis. For each initialization scale, we generate initialization using $100$ distinct random seeds. The sample points are fixed, evenly spaced over the interval $[-2,2]$. We observe that, regardless of the value of $n$, the generalization error tends to increase with the sample size, with this trend being particularly noticeable for $n=3$ and $n=4$. In Figure~\ref{scale2}, we use samples drawn independently from an standard Gaussian distribution. Both samples and initialization are generated using $50$ random seeds, and the generalization error is calculated as the average over these $50$ trials. Results of Figure~\ref{scale2} indicate that, statistically, the generalization error decreases as the sample size grows, indicating that larger datasets are conducive to better generalization. Moreover, this reduction in error is more significant at smaller scales of initialization, highlighting the benefits of smaller initializations for achieving improved generalization.

\begin{figure}[htbp]
\centering
\subfigure[fixed sampling, $n=2$]{
\includegraphics[width=0.27\linewidth]{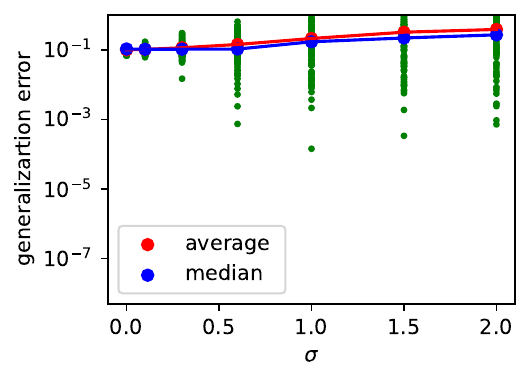}
\label{n=2}}
\subfigure[fixed sampling, $n=3$]{
\includegraphics[width=0.27\linewidth]{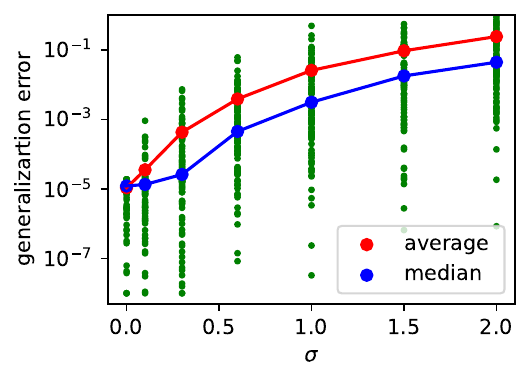} \label{n=3}}
\subfigure[fixed sampling, $n=4$]{
\includegraphics[width=0.27\linewidth]{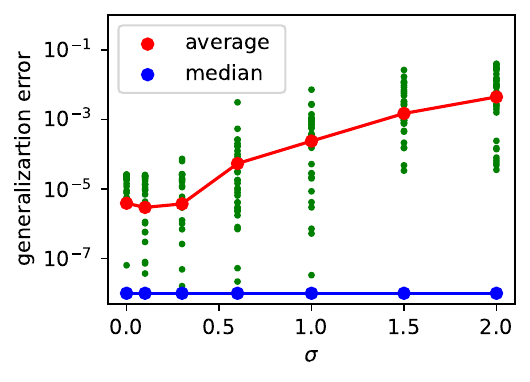} \label{n=4}}
\subfigure[fixed sampling, $n=5$]{
\includegraphics[width=0.27\linewidth]{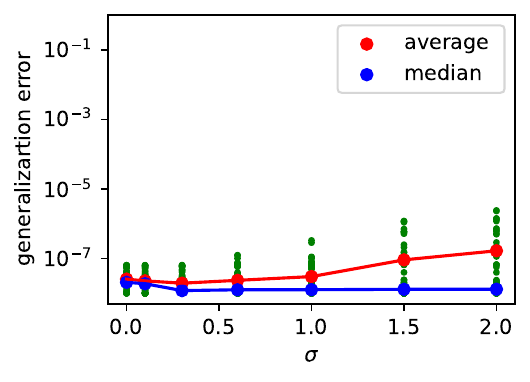} \label{n=5}}
\subfigure[fixed sampling, $n=6$]{
\includegraphics[width=0.27\linewidth]{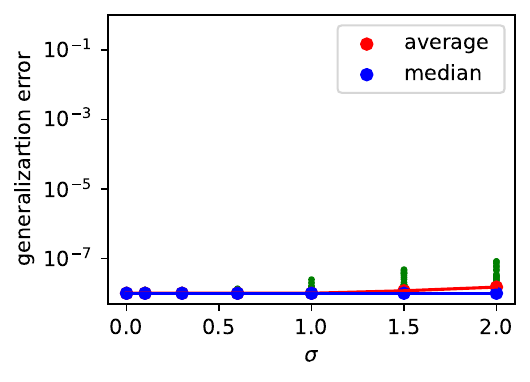} \label{n=6}}
\subfigure[random sampling]{
\includegraphics[width=0.27\linewidth]{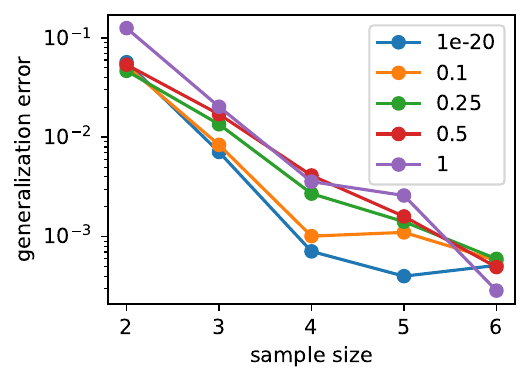}
\label{scale2}}
\caption{
The network and target function correspond to Example~\ref{example1}. Here, $n$ represents the sample size. For Figures~\ref{n=2} through \ref{n=6}, samples were evenly spaced on the interval $[-2, 2]$. In Figure~\ref{scale2}, the dataset $\{(x_i,y_i)\}_{i=1}^n$ is such that $y_i=f^*(x_i)$, with the $\{x_i\}_{i=1}^n$ being independently and identically distributed according to a standard Gaussian distribution. For each combination of initialization scale and sample size, we conducted $50$ trials with different seeds to generate data points and parameter initializations. The reported generalization error is the average over these trials. Curve legends indicate the initialization scale.
}
\label{scale effects}
\end{figure}

\section{Effect of randomness of initialization}
\label{sec: effect of randomness}
\subsection{Initialization and trajectory of parameters}

Previously, we empirically demonstrated that small-scale initialization enhances the generalization. This section delves into the dynamics of gradient flow under small initialization.

\begin{theorem}
Consider the gradient flow governed by the differential equation
\begin{equation}
\begin{aligned}
    &\frac{d\vtheta}{dt} = -\nabla \ell(\vtheta(t))
    ,\vtheta(0) = \vtheta_{0},
\end{aligned}
\label{eq:main1}
\end{equation}
where $\ell(\vtheta) = \frac{1}{2}\sum_{i=1}^n (f_{\vtheta}(\vx_i) - y_i)^2$ for $(\vx_i,y_i) \in \mathbb{R}^d \times \mathbb{R}$, with the model $f_{\vtheta}(\vx) = \sum_{k=1}^m a_k\sigma(\vw_k^\top \vx)$, and the parameter vector $\vtheta = (a_1, \vw_1, \ldots, a_m, \vw_m) \in \mathbb{R}^{m(d+1)}$. The solution to \eqref{eq:main1} is denoted by $\phi(\vtheta_{0},t)$. Define $\vgamma := \sum_{i=1}^n y_i\vx_i$, $C_i(\vtheta) := a_i\|\vgamma\|_2 + \vw_i^{\top} \vgamma$, and $\vC(\vtheta) := (C_1(\vtheta), \ldots, C_m(\vtheta))$.
Assume the following conditions:
\begin{enumerate}[(i)]
    \item  $\sigma(x)$ is twice continuously differentiable on $\mathbb{R}$, $\sigma(0) = 0$, and $\sigma'(0) \neq 0$.

    \item  $\vgamma \neq \mathbf{0}$.
\end{enumerate}

Under these assumptions, the following statements hold: 
\begin{enumerate}[(i)]
\item For any $t \in \mathbb{R}$ and $\vtheta \in \mathbb{R}^{m(d+1)}$, the limit $h(\vtheta,t) := \lim_{\alpha \to 0} \phi(\alpha \vtheta, t + \frac{1}{\|\vgamma\|_2}\log\frac{1}{\alpha})$ exists.

\item The function $h(\vtheta_0,t)$ is  determined by $\vC(\vtheta_0)$. That is, if $\vC(\vtheta_1) = \vC(\vtheta_2)$, then $h(\vtheta_1,t) = h(\vtheta_2,t)$ for all $t$.

\item If $\vC(\vtheta_0) \neq \mathbf{0}$, then the trajectory $T_{\vtheta_0} := \{h(\vtheta_0,t) : t \in \mathbb{R}\}$ is determined by $\frac{\vC(\vtheta_0)}{\|\vC(\vtheta_0)\|_2}$. That is, if $\vC(\vtheta_1) = \vC(\vtheta_2)$, then $T_{\vtheta_1} = T_{\vtheta_2}$.
\end{enumerate}
\label{theorem1}
\end{theorem}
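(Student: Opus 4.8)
The plan is to analyze everything through the linearization of the flow at the critical point $\vtheta=\mathbf 0$. Because $\sigma(0)=0$ one has $\nabla\ell(\mathbf 0)=\mathbf 0$, and a short computation shows that $\nabla^2\ell(\mathbf 0)$ is block-diagonal over the $m$ neurons with $k$-th block $\left(\begin{smallmatrix}0 & -\sigma'(0)\vgamma^\top\\ -\sigma'(0)\vgamma & \mathbf 0\end{smallmatrix}\right)$; hence the linearization $M:=-\nabla^2\ell(\mathbf 0)$ of the vector field $-\nabla\ell$ is symmetric and block-diagonal, with spectrum $\{\mu,-\mu,0\}$ on each block, where $\mu:=\sigma'(0)\|\vgamma\|_2$. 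I take $\sigma'(0)>0$, so $\mu>0$ (for $\sigma'(0)<0$ everything goes through with the sign of $\vw_k^\top\vgamma$ in $C_k$ reversed, and when $\sigma'(0)=1$, as for $\tanh$, the exponent $\tfrac1\mu\log\tfrac1\alpha$ below is exactly the one in the statement). The unstable eigenspace $E^u$ is $m$-dimensional, spanned by the orthogonal vectors $\mathbf e_k:=(\mathbf 0,\dots,\|\vgamma\|_2,\vgamma,\dots,\mathbf 0)$ supported on the $a_k,\vw_k$ slots, and the key algebraic fact is that $\langle\vtheta,\mathbf e_k\rangle=a_k\|\vgamma\|_2+\vw_k^\top\vgamma=C_k(\vtheta)$. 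Thus the orthogonal projection $P$ onto $E^u$ equals $P\vtheta_0=\tfrac1{2\|\vgamma\|_2^2}\sum_k C_k(\vtheta_0)\,\mathbf e_k$, so $\vtheta_0\mapsto P\vtheta_0$ is a fixed linear bijection of $\vC(\vtheta_0)$ onto $E^u$; all three assertions will follow once $h(\vtheta_0,t)$ is pinned down in terms of $P\vtheta_0$.

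For (i) I would split the interval $[0,t+T_\alpha]$, with $T_\alpha:=\tfrac1\mu\log\tfrac1\alpha$, into an early phase $[0,T_\alpha-K]$ ($K$ a large fixed constant) and a remaining phase of fixed length $K+t$. On the early phase, since $\sigma\in C^2$ and $\sigma(0)=0$, the remainder $R(\vtheta):=-\nabla\ell(\vtheta)-M\vtheta$ is $O(\|\vtheta\|_2^2)$ near $\mathbf 0$, and a Gronwall/bootstrap argument using $\|e^{Ms}\|=e^{\mu s}$ gives $\phi(\alpha\vtheta_0,s)=e^{Ms}(\alpha\vtheta_0)+O\big((\alpha e^{\mu s})^2\big)$ uniformly there. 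Evaluating at $s=T_\alpha-K$, where $\alpha e^{\mu s}=e^{-\mu K}$, and using $\|e^{Ms}(I-P)\vtheta_0\|_2\le\|\vtheta_0\|_2$, yields $\phi(\alpha\vtheta_0,T_\alpha-K)=e^{-\mu K}P\vtheta_0+O(e^{-2\mu K})+o_\alpha(1)$; in particular $\xi_K:=\lim_{\alpha\to0}\phi(\alpha\vtheta_0,T_\alpha-K)$ exists and lies within $Ce^{-2\mu K}$ of $e^{-\mu K}P\vtheta_0$. Continuity of the flow over the bounded remaining phase gives $\lim_{\alpha\to0}\phi(\alpha\vtheta_0,t+T_\alpha)=\phi(\xi_K,K+t)$ for every $K$; and because the orbit stays near $\mathbf 0$ on all of $[0,K]$ save a bounded terminal subinterval, where the Lipschitz constant of $-\nabla\ell$ is $\approx\mu$, the time-$(K+t)$ flow sends the $Ce^{-2\mu K}$-ball about $e^{-\mu K}P\vtheta_0$ into a set of diameter $\le C_t e^{-\mu K}$. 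Hence any two subsequential limits of $\phi(\alpha\vtheta_0,t+T_\alpha)$ differ by at most $C_t e^{-\mu K}$ for all $K$, so letting $K\to\infty$ shows that the limit $h(\vtheta_0,t)$ exists.

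To identify it I would invoke the (center-)unstable manifold theorem for the $C^1$ field $-\nabla\ell$: there is a unique trajectory $\psi_{P\vtheta_0}$ on $W^u(\mathbf 0)$ with $e^{-\mu s}\psi_{P\vtheta_0}(s)\to P\vtheta_0$ as $s\to-\infty$, and tangency of $W^u$ to $E^u$ gives $\psi_{P\vtheta_0}(-K)=e^{-\mu K}P\vtheta_0+o(e^{-\mu K})$. Since $\xi_K$ and $\psi_{P\vtheta_0}(-K)$ then differ by $o(e^{-\mu K})$, flowing both forward for time $K+t$ along orbits that stay small until time $\approx K$ forces $\phi(\xi_K,K+t)\to\psi_{P\vtheta_0}(t)$, so $h(\vtheta_0,t)=\psi_{P\vtheta_0}(t)$ — which is $\mathbf 0$ when $\vC(\vtheta_0)=\mathbf 0$ — for $t$ in the maximal interval of existence of $\psi_{P\vtheta_0}$. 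Then (ii) is immediate, since $h(\vtheta_0,t)=\psi_{P\vtheta_0}(t)$ depends on $\vtheta_0$ only through $P\vtheta_0$, hence only through $\vC(\vtheta_0)$. For (iii), the defining asymptotics force $\psi_{cP\vtheta_0}(t)=\psi_{P\vtheta_0}(t+\tfrac1\mu\log c)$ for every $c>0$, so the image $T_{\vtheta_0}=\psi_{P\vtheta_0}(\mathbb R)$ is unchanged under positive rescaling of $P\vtheta_0$; therefore $T_{\vtheta_0}$ depends only on $P\vtheta_0/\|P\vtheta_0\|_2$, and — $P$ being the fixed linear image of $\vC$ — only on $\vC(\vtheta_0)/\|\vC(\vtheta_0)\|_2$, which in particular gives $T_{\vtheta_1}=T_{\vtheta_2}$ whenever $\vC(\vtheta_1)=\vC(\vtheta_2)$.

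The main obstacle is the uniform estimate on the early phase: carrying the Gronwall bootstrap through the $\alpha$-dependent horizon $T_\alpha-K=\Theta(\log\tfrac1\alpha)$ requires a genuinely quadratic remainder $R(\vtheta)=O(\|\vtheta\|_2^2)$ rather than merely $o(\|\vtheta\|_2)$, the quadratic gain being precisely what absorbs the factor $\log\tfrac1\alpha$ that would otherwise spoil the bound — this is exactly where the hypotheses $\sigma\in C^2$ and $\sigma(0)=0$ enter. A secondary subtlety is that the resonance $\mu=\mu+\mu-\mu$ precludes a smooth linearization on $W^u(\mathbf 0)$, so one must work with the merely $C^1$ unstable manifold and the exponential estimates it provides; the ``shrinking-diameter-over-all-$K$'' device is then what delivers existence of the limit without a sharper asymptotic expansion of $\phi$.
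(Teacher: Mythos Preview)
Your skeleton matches the paper's: compute $-\nabla^2\ell(\mathbf 0)$, identify the top eigenspace as the span of the $\mathbf e_k$, observe that $P\vtheta_0$ is a fixed linear image of $\vC(\vtheta_0)$, and derive (ii), (iii) from (i) via the rescaling $\alpha\mapsto c\alpha$. You also correctly flag the missing $\sigma'(0)$ factor in the time shift.

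Where you diverge is in the mechanics of (i). The paper proves a general dynamical result (its Theorem~3) by a direct Cauchy argument in $\alpha$: it compares $\phi(\alpha\vv_1+\vu_\alpha,\cdot)$ with $\phi(\alpha'\vv_1+\vu_{\alpha'},\cdot)$ at an intermediate scale $\epsilon=\alpha^{s}$ with $s=(\mu_1-\mu_2)/(2\mu_1-\mu_2)$ chosen to balance the $O(\epsilon^2)$ linearization error against the $O((\epsilon/\alpha)^{\mu_2/\mu_1}\alpha)$ contribution from the non-top eigenspace, importing three Gronwall-type lemmas from Li--Luo--Lyu for the bounds; this yields an explicit convergence rate. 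It never names the unstable manifold, obtaining (ii) simply by rerunning the limit with $\vu_\alpha=\alpha(\vtheta_0-P\vtheta_0)$. You instead bootstrap to $\phi(\alpha\vtheta_0,s)=e^{Ms}(\alpha\vtheta_0)+O((\alpha e^{\mu s})^2)$, pass to subsequential limits at $T_\alpha-K$, and use a shrinking-diameter-over-$K$ device to force uniqueness; you then invoke the $C^1$ unstable manifold to \emph{identify} $h(\vtheta_0,t)$ as the orbit $\psi_{P\vtheta_0}$. Your route is more geometric and gives a cleaner picture of what $h$ is; the paper's is more self-contained and quantitative.

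Two small points to tighten. First, you assert ``$\xi_K:=\lim_{\alpha\to0}\phi(\alpha\vtheta_0,T_\alpha-K)$ exists'' before the limit is established; at that stage you only have subsequential limits (which is what your shrinking-diameter argument actually uses), so the wording should be adjusted. Second, the claim that the time-$(K+t)$ flow expands the $Ce^{-2\mu K}$-ball only by a factor $\approx e^{\mu K}$---rather than the crude $e^{L(K+t)}$---is the crux of the whole argument and deserves an explicit statement; it is exactly the content of the paper's borrowed Lemma~E.5, and in your framework it follows because the orbits stay in a small ball (where the Lipschitz constant of $-\nabla\ell$ is $\mu+O(r)$) for all but a bounded terminal subinterval.
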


The proof of Theorem~\ref{theorem1} is in the Appendix~\ref{Proof of theorem1}. A  more general result for dynamic systems is stated in Theorem~\ref{theorem3} in Appendix\ref{Proof of theorem1}. To intuitively understand Theorem~\ref{theorem1}, we consider the linearization of Equation~\eqref{eq:main1} at the origin. Assumption (i) of Theorem~\ref{theorem1} ensures that $\nabla \ell(\mathbf{0}) = \mathbf{0}$, allowing us to approximate $-\nabla \ell(\vtheta(t))$ by $-\text{Hess}(\ell(\mathbf{0}))\vtheta$ when $\|\vtheta\|_2$ is small. Under this linear approximation, the solution to Equation~\eqref{eq:main1} can be expressed as $\vtheta(t) \approx e^{-\text{Hess}(\ell(\mathbf{0}))t}\vtheta_0$. When the norm $\|\vtheta_0\|_2$ is sufficiently small, a large $t$ is required for $\vtheta(t)$ to move significantly away from the origin. In such cases, the largest eigenvalue of $-\text{Hess}(\ell(\mathbf{0}))$, denoted $\mu_1$, becomes dominant, leading to
\[
\vtheta(t) \approx e^{\mu_1 t}\sum_{i=1}^k (\vtheta_0^\top \vv_i)\vv_i,
\]
where $\{\vv_1, \ldots, \vv_k\}$ is the orthonormal basis of the eigenspace corresponding to $\mu_1$. The evolution of $\vtheta(t)$ is thus determined by the coefficients $\{\vtheta_0^\top \vv_i\}_{i=1}^k$, which are encapsulated in the vector $\vC(\vtheta_0)$ defined in Theorem~\ref{theorem1}. 

In the context of two-layer neural networks, $\vC(\vtheta_0) = (C_1(\vtheta_0), C_2(\vtheta_0), \ldots, C_m(\vtheta_0))$ provides an insightful interpretation. The expression $\vtheta(t) \approx e^{\mu_1 t}\sum_{i=1}^k (\vtheta_0^\top \vv_i)\vv_i$ suggests that for $i = 1, 2, \ldots, m$, the following approximations hold:
\begin{equation}
\begin{aligned}
   & a_{i}(t) \approx \frac{C_{i}(\vtheta_0)}{2 \|\vgamma\|_2}e^{\|\vgamma\|_2 t}, 
    \vw_{i}(t) \approx \frac{C_{i}(\vtheta_0)\vgamma}{2\|\vgamma\|_2^2}e^{\|\vgamma\|_2 t},
\end{aligned}
\label{eq:approx}
\end{equation}
where $\vgamma$ is a vector determined by the data. Equation \eqref{eq:approx} indicates that the direction of the vector $(a_i, \vw_i)$ is consistent across all neurons, characterized by $\vgamma$. This observation is in line with the findings of \cite{zhou2022towards}, which show that neural networks with small initial weights tend to have input weights of hidden neurons aligning along certain data-determined directions. Moreover, Equation \eqref{eq:approx} indicates that the magnitude of $(a_i, \vw_i)$ is determined by $C_i(\vtheta_0)$. Thus, the vector $\vC(\vtheta_0)$, representing the initial magnitudes of all neurons, determines early evolution of the parameters. The normalized vector $\frac{\vC(\vtheta_0)}{\|\vC(\vtheta_0)\|_2}$, representing the initial relative magnitudes of all neurons, determines trajectory of the parameters. Due to this, we refer $\frac{\vC(\vtheta_0)}{\|\vC(\vtheta_0)\|_2}$ as "initial imbalance ratio".


To corroborate the theoretical insights posited by Theorem~\ref{theorem1}, we conducted a series of experiments. We chose a model of the form $f_{\vtheta}(x)=a_1\tanh(w_1x+b_1)+a_2\tanh(w_2x+b_2)$, with the target function defined as $f^*(x)=\tanh(x+1)$.
According to Theorem~\ref{theorem1}, under small initialization, the parameter trajectory is determined by the normalized vector $\left(\frac{C_1(\vtheta_0)}{\sqrt{C_1^2(\vtheta_0)+C_2^2(\vtheta_0)}}, \frac{C_2(\vtheta_0
)}{\sqrt{C_1^2(\vtheta_0)+C_2^2(\vtheta_0)}}\right)$. Given that $\sigma(x) = \tanh(x)$ is an odd function, the sign inversion of both $C_1(\vtheta_0)$ and $C_2(\vtheta_0)$ leads to a symmetric trajectory about the origin, which we consider equivalent. Therefore, the trajectory is effectively characterized by the ratio $\frac{C_1(\vtheta_0)}{C_2(\vtheta_0)}$, denoted by $c(\vtheta_0):=\frac{C_1(\vtheta_0)}{C_2(\vtheta_0)}$. For simplicity, we will henceforth denote $c(\vtheta_0)$ by $c$.

Figure~\ref{effect of c} shows the training results across five trials. Each trial used a different initialization scale and random seed to generate Gaussian distribution but kept the ratio $c=0.5$ across five trials by scaling the initialization of second neuron. The results demonstrate that both the loss and the parameter trajectories were consistent across all trials, lending strong empirical support to the theoretical prediction that the trajectory is governed by the ratio $c$ under small initialization.

 
\begin{figure}[htbp]
\centering
\subfigure[]{
\includegraphics[width=0.292\linewidth]{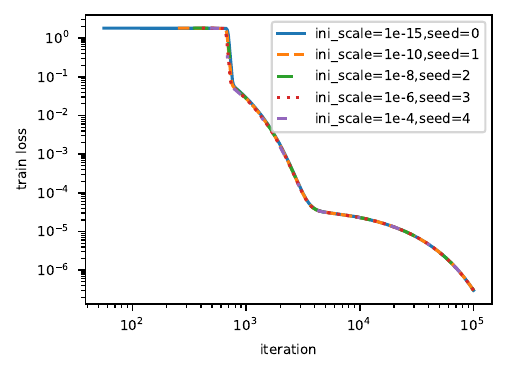}
\label{c and loss}}
\subfigure[]{
\includegraphics[width=0.468\linewidth]{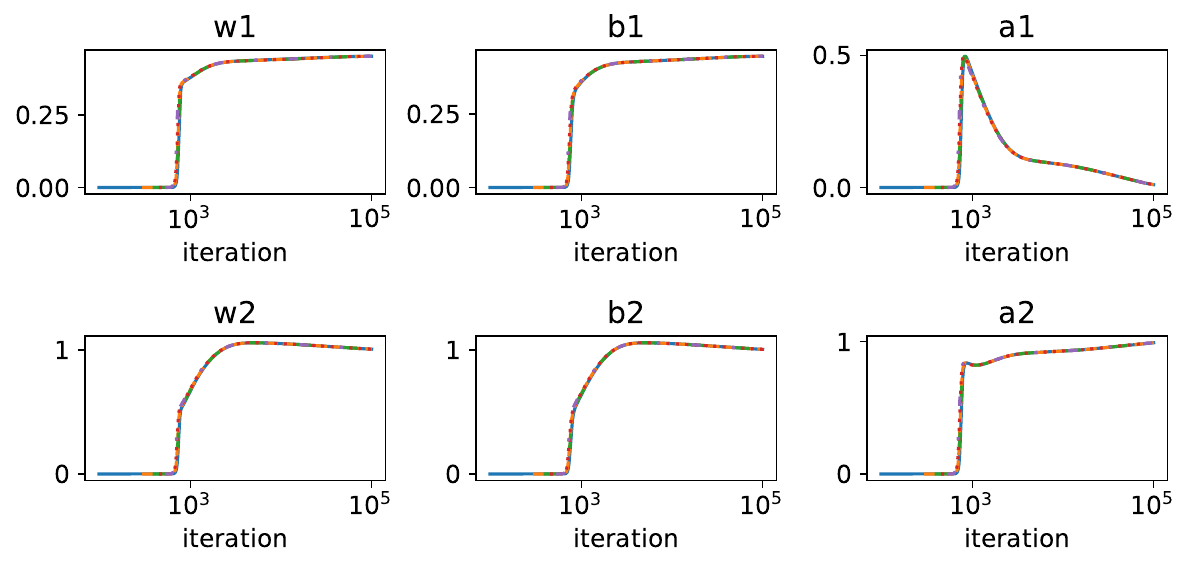}
\label{c and para}}

\caption{
The network and target function correspond to Example~\ref{example1}. We trained the network across five trials, each utilizing an evenly spaced $6$ data points within the interval $[-2,2]$. Distinct initialization seeds and scales were used for each trial, but by scaling the initial parameters of the second neuron, we keep $c=0.5$ across all trials. To align the curves, we applied translations based on distances calculated by Theorem~\ref{theorem1}.
}
\label{effect of c}
\end{figure}

\subsection{Initialization and convergence point}

As the scale of initialization approaches zero, the parameters' trajectory has a limit. 
A natural question arises: does the convergence point of the parameters also tend towards a limit as the initialization scale becomes infinitesimally small? We affirmatively address this question in Theorem~\ref{theorem2}.

\begin{theorem}
    Under the notations and assumptions of Theorem~\ref{theorem1}, and assuming that $\sigma(x)$ is analytic, then:
    \begin{enumerate}[(i)]
        \item The limit $h(\vtheta,t) := \lim_{\alpha \to 0} \phi(\alpha \vtheta, t + \frac{1}{\|\vgamma\|_2}\log\frac{1}{\alpha})$ exists.
        \item For any $\vtheta$, if the set $\{h(\vtheta,t) : t \geq 0\}$ is bounded, then the limit $\lim_{t \to \infty} h(\vtheta,t)$ exists and is determined by the normalized vector $\frac{\vC(\vtheta)}{\|\vC(\vtheta)\|_2}$. Specifically, if $\frac{\vC(\vtheta_1)}{\|\vC(\vtheta_1)\|_2} = \frac{\vC(\vtheta_2)}{\|\vC(\vtheta_2)\|_2}$, then $\lim_{t \to \infty} h(\vtheta_1,t) = \lim_{t \to \infty} h(\vtheta_2,t)$.
        \item If $\lim_{t \to \infty} h(\vtheta_0,t)$ exists and is not a saddle point of $\ell(\vtheta)$, then 
        \[
        \lim_{t \to \infty} h(\vtheta_0,t) = \lim_{\alpha \to 0} \lim_{t \to \infty} \phi(\alpha \vtheta_0, t + \frac{1}{\|\vgamma\|_2}\log\frac{1}{\alpha}).
        \]
        Additionally, the limit $\lim_{t \to \infty} h(\vtheta,t)$ is continuous at $\vtheta_0$. 
    \end{enumerate}
    \label{theorem2}
\end{theorem}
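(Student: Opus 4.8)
The plan is to combine Theorem~\ref{theorem1} with the real-analyticity of $\ell$, which is the key new hypothesis. Since $\sigma$ is analytic, $\ell(\vtheta)=\tfrac12\sum_{i=1}^n\big(\sum_k a_k\sigma(\vw_k^\top\vx_i)-y_i\big)^2$ is a finite sum of products and compositions of real-analytic maps, hence real-analytic on $\mathbb{R}^{m(d+1)}$; analyticity also implies the $C^2$ hypothesis of Theorem~\ref{theorem1}, so part~(i) is immediate. Two structural observations drive the rest. First, $h(\vtheta,\cdot)$ is itself an integral curve of $-\nabla\ell$: writing $h(\vtheta,t)=\lim_{s\to\infty}\phi\big(e^{-\|\vgamma\|_2 s}\vtheta,\,t+s\big)$, each map $t\mapsto\phi(e^{-\|\vgamma\|_2 s}\vtheta,t+s)$ is, by autonomy of the flow, a trajectory of $-\nabla\ell$, and the convergence is locally uniform in $t$ (this should fall out of the proof of Theorem~\ref{theorem1}), so the limiting $h(\vtheta,\cdot)$ solves the same ODE. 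Second, a change of variables $\beta=\alpha/\lambda$ in the definition of $h$ yields the scaling relation $h(\lambda\vtheta,t)=h\big(\vtheta,\,t+\tfrac{1}{\|\vgamma\|_2}\log\lambda\big)$ for every $\lambda>0$, so rescaling the initialization merely reparametrizes the limiting trajectory in time.

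For part~(ii): if the forward orbit $\{h(\vtheta,t):t\ge 0\}$ is bounded, its closure is compact, and since $\ell$ is real-analytic the \L{}ojasiewicz gradient inequality forces the gradient-flow trajectory $h(\vtheta,\cdot)$ to have finite length on $[0,\infty)$ and hence to converge to a single critical point $p:=\lim_{t\to\infty}h(\vtheta,t)$. To see that $p$ depends only on $\vC(\vtheta)/\|\vC(\vtheta)\|_2$, note that $\vC$ is linear in $(a_k,\vw_k)$, so $\vC(\lambda\vtheta)=\lambda\vC(\vtheta)$; given $\vtheta_1,\vtheta_2$ with equal normalized $\vC$, pick $\lambda>0$ with $\vC(\lambda\vtheta_1)=\vC(\vtheta_2)$, and then $h(\vtheta_2,t)=h(\lambda\vtheta_1,t)=h\big(\vtheta_1,t+\tfrac{1}{\|\vgamma\|_2}\log\lambda\big)$ by Theorem~\ref{theorem1}(ii) together with the scaling relation; letting $t\to\infty$ equates the two limits.

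For part~(iii): let $p=\lim_{t\to\infty}h(\vtheta_0,t)$; it is a critical point of $\ell$ since it lies in an $\omega$-limit set, and the assumption that it is not a saddle lets us treat it as a local minimum of $\ell$. The crucial analytic input is the \emph{stability of local minima of analytic functions under gradient flow}, a standard consequence of the \L{}ojasiewicz inequality: for every neighborhood $U$ of $p$ there is a neighborhood $V\subseteq U$ such that any gradient-flow trajectory meeting $V$ remains in $U$ and converges to a critical point inside $U$. Given $U$, choose $V$ accordingly and then $T$ so large that $h(\vtheta_0,T)\in V$. By the definition of $h$, $\phi\big(\alpha\vtheta_0,\,T+\tfrac{1}{\|\vgamma\|_2}\log\tfrac1\alpha\big)\to h(\vtheta_0,T)$ as $\alpha\to 0$, so this point lies in $V$ for all small $\alpha$; since time shifts do not affect the $\omega$-limit, the genuine trajectory $\phi(\alpha\vtheta_0,\cdot)$ then converges to some $q_\alpha\in U$, and as $U$ is arbitrary $q_\alpha\to p$, which is precisely the asserted interchange $\lim_{\alpha\to 0}\lim_{t\to\infty}\phi(\alpha\vtheta_0,t+\tfrac{1}{\|\vgamma\|_2}\log\tfrac1\alpha)=p=\lim_{t\to\infty}h(\vtheta_0,t)$. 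Continuity of $\vtheta\mapsto\lim_{t\to\infty}h(\vtheta,t)$ at $\vtheta_0$ follows the same template: $h(\cdot,T)$ is continuous (again from the locally uniform convergence in Theorem~\ref{theorem1}, with continuity of $\vC$), so $h(\vtheta,T)\in V$ for all $\vtheta$ near $\vtheta_0$; since $h(\vtheta,\cdot)$ is itself a gradient-flow trajectory it then converges into $U$, and shrinking $U$ gives $\lim_{t\to\infty}h(\vtheta,t)\to p$.

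The main obstacle is the analytic-geometry content of parts~(ii) and~(iii): converting the \L{}ojasiewicz gradient inequality into (a) convergence — not merely boundedness — of the rescaled trajectory, and (b) the uniform "funnelling" of all nearby trajectories into a small neighborhood of a local minimum. This is exactly the place where analyticity of $\sigma$ is indispensable. A secondary point is to extract from the proof of Theorem~\ref{theorem1} that the limit defining $h$ is uniform on compact sets, which underlies both "$h(\vtheta,\cdot)$ is a trajectory" and "$h(\cdot,T)$ is continuous"; and one should dispatch the degenerate case $\vC(\vtheta_0)=\mathbf{0}$, which is excluded because then $h(\vtheta_0,\cdot)\equiv\mathbf{0}$ and the origin is a saddle of $\ell$ (its Hessian is indefinite since $\vgamma\neq\mathbf{0}$ and $\sigma'(0)\neq 0$).
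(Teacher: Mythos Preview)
Your proposal is correct and follows essentially the same approach as the paper: both establish that $h(\vtheta,\cdot)$ is itself a gradient-flow trajectory (the paper's Proposition~1), use the scaling relation $h(\lambda\vtheta,t)=h(\vtheta,t+\tfrac{1}{\|\vgamma\|_2}\log\lambda)$ (the paper's Corollary~2), apply the \L{}ojasiewicz inequality to get finite trajectory length and hence convergence for part~(ii), and prove part~(iii) via a stability lemma for local minima under gradient flow (the paper's Lemma~1), which is exactly the ``funnelling'' argument you describe. The paper packages these into an intermediate Theorem~4 for general analytic nonnegative $\ell$ with a strict saddle at the origin, then specializes; your outline anticipates the same ingredients, including the need for locally uniform convergence in Theorem~\ref{theorem1} and the identification of ``not a saddle'' with ``local minimum.''
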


The proof of Theorem~\ref{theorem2} is presented in the Appendix~\ref{Proof of Theorem2}. We conduct experiments   when$f_{\vtheta}(x) = a_1\tanh(w_1x+b_1) + a_2\tanh(w_2x+b_2)$ and $f^*(x) = \tanh(x+1)$. Literature suggests that convergence to a saddle point is rare \citep{panageas2019first} for gradient flow. Moreover, in neural network experiments, divergence of $\vtheta(t)$ to infinity is seldom observed. Thus, the conditions of (ii) and (iii) are typically met. The conclusion (iii) of Theorem~\ref{theorem2} affirms that for a  small initialization scale, the convergence point of the parameters can be brought arbitrarily close to $\lim_{t \to \infty} h(\vtheta_0, t)$. The conclusion of (ii) implies that $\lim_{t \to \infty} h(\vtheta_0, t)$ is determined by $\frac{\vC}{\|\vC\|_2}$. Therefore, under sufficiently small initialization, the vector $\frac{\vC}{\|\vC\|_2}$ almost determines the final convergence point of the parameters. Define $c := C_1/C_2$ and $\Tilde{c} := \min\{|c|, |\frac{1}{c}|\}$. 
Networks initialized with $c$ and $\frac{1}{c}$ are identical upon permuting the neurons and network initialized with $c$ and $-c$ yields trajectories symmetric about the origin. Hence, $\Tilde{c}$ effectively encompasses all cases of initialization by accounting for symmetry. 

In Figure~\ref{c and converging}, we investigate the impact of initialization on the convergence point under small initialization. Figure~\ref{plot Q n=6} visualizes the target set $Q^*$ alongside the convergence points for various $\Tilde{c}$. The line is $Q^1$, while the surface are $Q^2$. These simulations, run over $10^6$ iterations with a learning rate of $0.05$ and a sample size of $6$, reveal two significant observations:
\begin{enumerate}[(i)]
    \item The parameters consistently converge to $Q^*$ across different initialization, corroborating the theoretical results in Table~\ref{table 1} that for $n \geq 6$, the global minimum coincides with $Q^*$.
    \item The convergence points lie on a one-dimensional manifold parameterized by $\Tilde{c}$. Notably, as $\Tilde{c}$ nears $1$, the convergence point moves closer to $Q^1$. 
\end{enumerate}

Figure~\ref{c and Q1 Q2} further illustrates the convergence behavior of the network's parameters for diverse initialization, focusing on convergence to points $Q^1$ or $Q^2$. The results demonstrate that convergence occurs at $Q^1$ when $c$ approximates $1$. For $c$ values significantly divergent from $1$, convergence at $Q^2$ becomes more likely, with the demarcation at $c=1.35$ and $c=0.74$.

In the small initialization dynamics of a neural network, ratio $c$ governs relative magnitude of two neurons at initial stage. A $c$ close to $1$ suggests similar magnitudes for both neurons, leading to convergence at $Q^1$, where each neuron contributes to the output function. Conversely, a substantially larger $c$ implies that the first neuron's magnitude predominates, resulting in convergence at $Q^2$, where the second neuron's output contribution is zero. The two extremes, $c=1$ and $c=+\infty$, demonstrate this: for $c=1$, the ratios $\frac{a_{1}}{a_{2}}$, $\frac{w_{1}}{w_{2}}$, and $\frac{b_{1}}{b_{2}}$ remain constant at $1$ for all $t$, converging to $Q^1$. For $c=+\infty$, we have $a_{2}=w_{2}=b_{2}=0$ for all $t$, resulting in convergence at $Q^2$.

\begin{figure}[htbp]
\centering
\subfigure[]{
\includegraphics[width=0.4\linewidth]{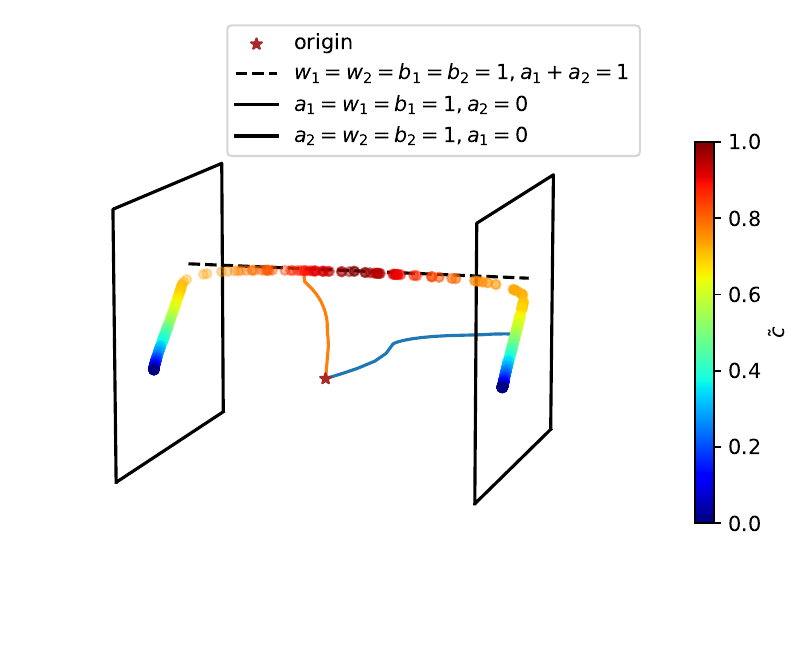}
\label{plot Q n=6}}
\subfigure[]{
\includegraphics[width=0.4\linewidth]{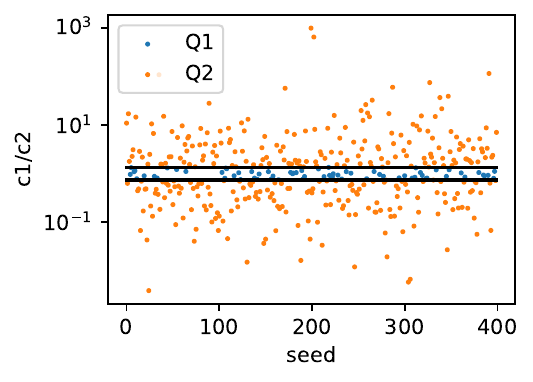}
\label{c and Q1 Q2}}

\caption{
The network and target function correspond to Example~\ref{example1} with a sample size of $6$ and an initialization scale of $10^{-8}$. We utilized $400$ random seeds to initialize the parameters. Figure~\ref{plot Q n=6} shows the convergence points and the structures of $Q^1$ and $Q^2$, along with the origin and two exemplary training trajectories. The dashed line is $Q^1$ and the affine surface is $Q^2$. Figure~\ref{c and Q1 Q2} presents the convergence results using seeds $0$-$400$, where blue and orange represent convergence to $Q^1$ and $Q^2$, respectively. The x-axis denotes the seed index, and the y-axis measures the absolute value of the ratio $C_{1}/C_{2}$. The two black horizontal lines mark the ratios at $y=1.35$ and $y=0.74$.
}
\label{c and converging}
\end{figure}

\section{Effect of sample size}

This section delves into the impact of training sample size on the network’s ability to achieve recovery. In Figure~\ref{c and genloss}, we present the generalization error under various combinations of initialization and sample sizes. The network in question is defined as $f_{\vtheta}(x)=a_1\tanh(w_1x+b_1)+a_2\tanh(w_2x+b_2)$, with the target function being $f^*(x)=\tanh(x+1)$. For small initialization scale, the ratio $\tilde{c}=\min\{|c|,|\frac{1}{c}|\}$ adequately represents all initialization. 
When $n=2$, the network fails to recover the target function for any initialization, in line with the optimistic sample size theory~\citep{zhang2023optimistic}. To elucidate, a single-neuron target encompasses $3$ parameters. With only two samples, an infinite number of single-neuron targets could fit, preventing the network from identifying the desired target.

For $n=3$ (see Figure~\ref{c and genloss,n=2}), recovery is feasible solely when $\tilde{c}=0$ or $\tilde{c}=1$. Initially, with $n=3$, neither $Q^1$ nor $Q^2$ is separated (see Table~\ref{table 1}). The target sets $Q^1$ and $Q^2$, enveloped by global minima, 
lead the network to likely converge to these global minima rather than the target set. This accounts for the lack of recovery when $\tilde{c}\in (0,1)$. Nonetheless, fortuitous scenarios occur. When $\tilde{c}=1$, the ratios $\frac{a_1}{a_2}=\frac{w_1}{w_2}=\frac{b_1}{b_2}=1$ hold during training, simplifying the two-neuron network to a single-neuron model, reducing six parameters to three. In such instances, three samples implies that the global minimum equals the target set~\citep{zhang2023structure}, facilitating recovery when $\tilde{c}=1$. The case of $\Tilde{c}=0$ is similar.

For $n=4$ (see Figure~\ref{c and genloss, n=4} and Figure~\ref{c and genloss, recover at Q^2}), recovery is attainable with a positive probability of sampling and initialization. Meanwhile, with some samples, recovery remains unattainable for all initialization. Table~\ref{table 1} indicates that for $n=4$, $Q^2$ is separated, whereas $Q^1$ is not. Our findings corroborate that once $Q^2$ is separated, convergence to it is plausible with a positive probability. Additionally, we demonstrate the existence of samples for which no small-scale initialization leads to convergence to the target set.

For $n=5$, some samples enable the network to recover for all small initialization, while others do not. Table~\ref{table 1} shows that for $n=5$, both $Q^1$ and $Q^2$ are separated. Our experiments suggest that once $Q^1$ and $Q^2$ are separated, under some samples, recovery is achievable across all initialization. Moreover, at $n=5$, certain global minima with non-zero generalization error may be encountered during training with specific samples. For $n=6$, the network recovers for all small initialization, as all global minima are associated with zero generalization error (see Table~\ref{table 1}).

Using an analogy, we can liken the process of recovery to archery. The size of the training sample dictates the structural configuration of the target. Once the sample size surpasses the threshold known as the separation sample size, certain sections of the target become exposed and unobscured by any other global minima, rendering them accessible with a non-zero probability. Concurrently, there are certain shortcuts that facilitate reaching the target more directly. Specifically, when $\tilde{c}=0$ or $\tilde{c}=1$, the network is capable of hitting the target at the so-called optimistic sample size, even if $Q^1$ and $Q^2$ remain concealed by global minima. In essence, the sample size molds the target's architecture, while the initialization steers the direction of the "shot".

\begin{figure}[htbp]
\centering
\subfigure[$n=2$]{
\includegraphics[width=0.27\linewidth]{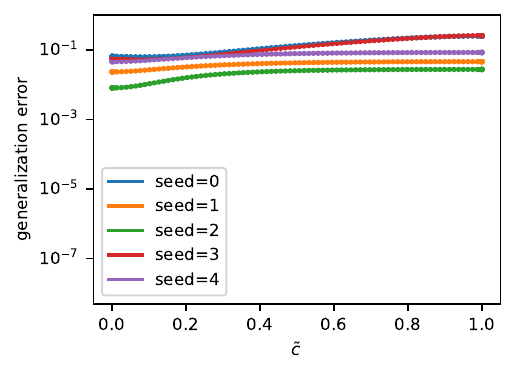}
\label{c and genloss,n=2}}
\subfigure[$n=3$]{
\includegraphics[width=0.27\linewidth]{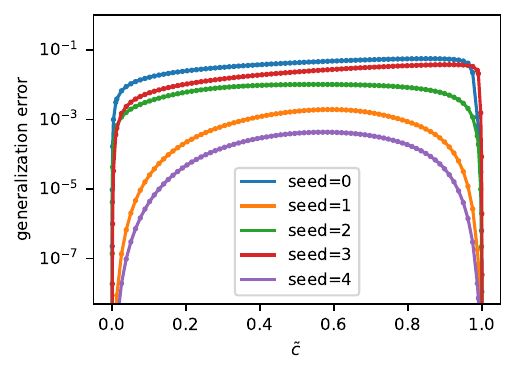}
\label{c and genloss, n=3}}
\subfigure[$n=4$]{
\includegraphics[width=0.27\linewidth]{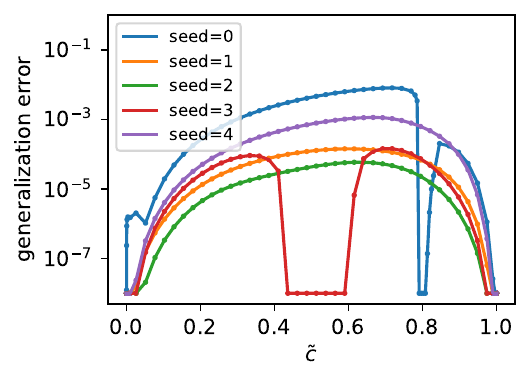}
\label{c and genloss, n=4}}
\subfigure[$n=5$]{
\includegraphics[width=0.27\linewidth]{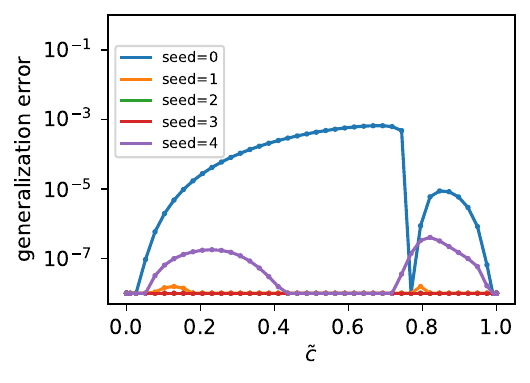}
\label{c and genloss, n=5}}
\subfigure[$n=6$]{
\includegraphics[width=0.27\linewidth]{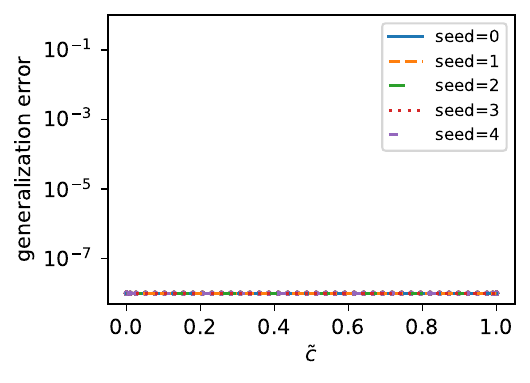}
\label{c and genloss, n=6}}
\subfigure[$n=4$, seed=$3$]{
\includegraphics[width=0.27\linewidth]{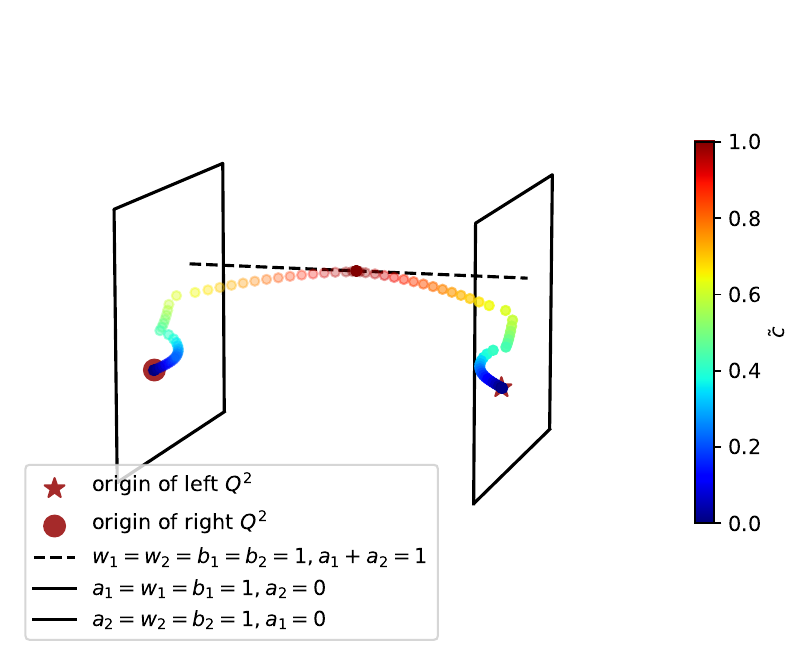}
\label{c and genloss, recover at Q^2}}
\caption{
The network $f_{\vtheta}(x)$ and target function $f^*$ correspond to Example~\ref{example1}. The samples  $\{(x_i,y_i)\}_{i=1}^n$, where $y_i=f^*(x_i)$, is obtained by drawing $\{x_i\}_{i=1}^n$ from a standard Gaussian distribution. Five random seeds were used to generate the samples. Generalization errors below $10^{-8}$ are considered as successful recovery and identified with $10^{-8}$. Figure~\ref{c and genloss, recover at Q^2} depicts the convergence point for $n=4$ with samples generated by seed $3$. The dashed line is $Q^1$ and the affine surface is $Q^2$. All experiments were initialized with a scale of $10^{-20}$.
}
\label{c and genloss}
\end{figure}


\section{Extension to multi-neuron networks}
The insights from the two-neuron, two-layer neural network analysis can be generalized to networks with multiple neurons. Our experiments on a neural network with a width of $1000$ and the activation function $\sigma(x)=\frac{x}{1+x^2}$ support this generalization. As depicted in Figure~\ref{multi neuron, effect of c}, the experiments reveal that under small initialization conditions, the parameter trajectories conform to the set $\frac{\vC}{\|\vC\|_2}$, as postulated in Theorem~\ref{theorem1}.

Figure~\ref{multilayer-weight} shows that in a two-layer neural network approximating a single-neuron target function, only a subset of neurons develop substantial weights and become key contributors to output function. These neurons are distinguished by having the largest $C_k$ values, aligning with earlier experimental observations of two-neuron network that neurons with higher $C_k$ values tend to have greater magnitudes. In Figure~\ref{multilayer-output function}, we note that the generalization error is significantly low in an overparameterized network. This is partly attributed to the phenomenon in Figure~\ref{multilayer-weight}, where most neurons possess minimal weights compared to the neuron with the largest weight magnitude. Consequently, the neural network operates as if it has fewer active neurons. This decrease in active neuron count effectively reduces the network's complexity and improves its generalization capability. Besides these experiments, we also conducted experiments with higher dimensional input (see Appendix~\ref{Appen:higher dimensional input}).

\begin{figure}[ht]
\centering
\subfigure[]{
\includegraphics[width=0.292\linewidth]{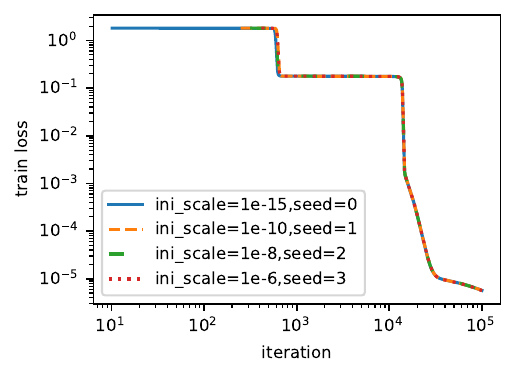}
\label{multi neuron, c and loss}}
\subfigure[]{
\includegraphics[width=0.468\linewidth]{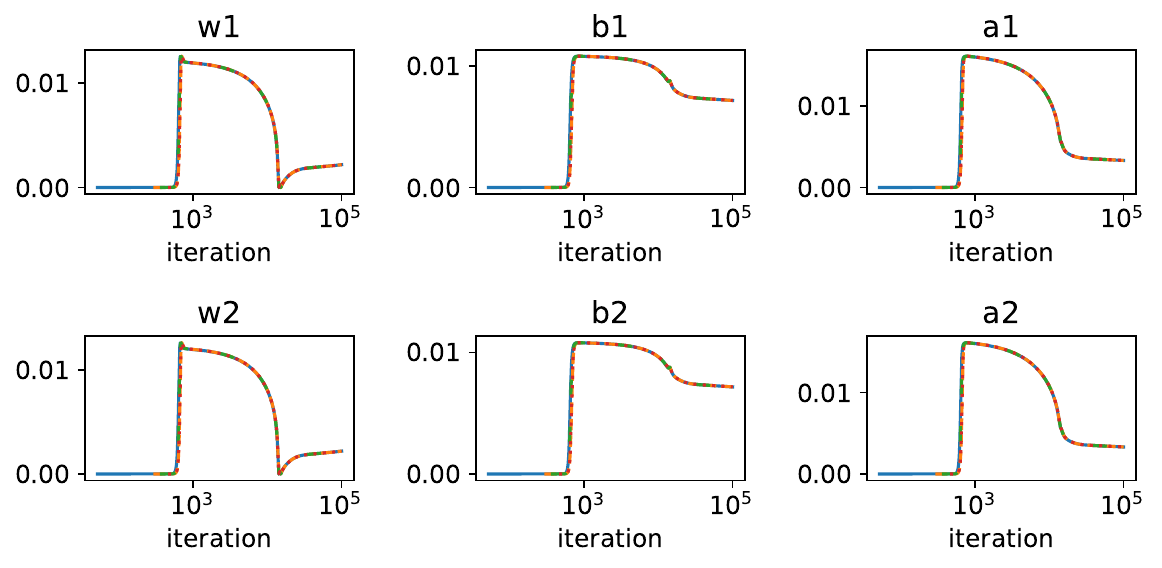}
\label{multi neuron, c and para}}
\caption{
A two-layer neural network with a width of $1000$ and activation function $\sigma(x)=\frac{x}{1+x^2}$ is trained on $6$ evenly spaced data points in the interval $[-2,2]$ with labels given by $y=\tanh(x+1)$. Four trials with varying initialization seeds and scales were conducted. The ratio of initial parameters $C_{i}/C_{1}$ is set to $1.5+0.0015(i-1)$ for each neuron $i=1,2,\ldots,1000$ in all trials. For visualization, curves in Figure~\ref{multi neuron, c and loss} are translated based on distances derived from Theorem~\ref{theorem1}. Figure~\ref{multi neuron, c and para} shows the parameter trajectories for the first two neurons.}
\label{multi neuron, effect of c}
\end{figure}

\begin{figure}[h!]
\centering
\subfigure[]{
\includegraphics[width=0.33\linewidth]{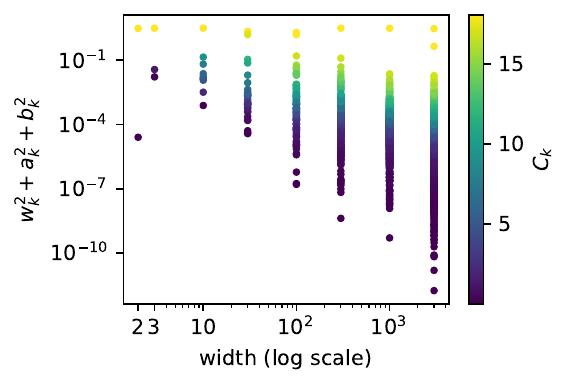}
\label{multilayer-weight}}
\subfigure[]{
\includegraphics[width=0.44\linewidth]{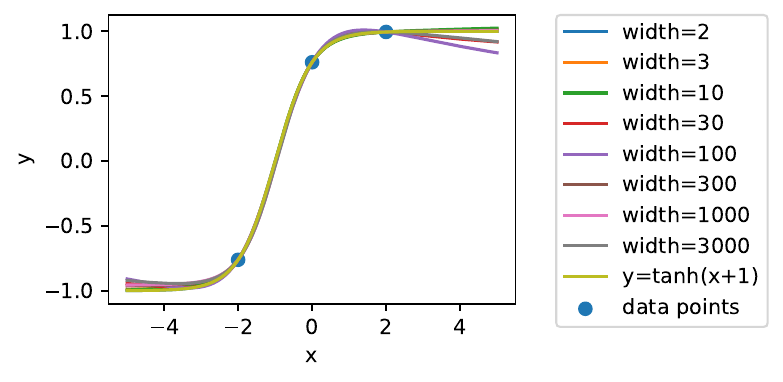}
\label{multilayer-output function}}
\caption{
Visualization of training dynamics and outcomes for a two-layer neural network with variable widths. The networks use the $\tanh(x)$ activation function and are trained to approximate the target function $y=\tanh(x+1)$ using a dataset of $3$ points equally spaced within the interval $[-2,2]$. Figure~\ref{multilayer-weight} shows the magnitude of the weights for individual, with each dot representing a neuron and the dot color indicating the absolute value of the scaling factor $C_{k}$ for the $k$-th neuron. Figure~\ref{multilayer-output function} illustrates the final output functions of the networks with different widths after training.
}
\label{multi neuron recovery}
\end{figure}

\section{Conclusion}
\label{sec:Conclusion}


Our investigation into the learning of single-neuron target functions within two-layer neural networks has elucidated the pivotal influence of initialization scale, randomness, and sample size on achieving perfect generalization. We found that smaller initialization scales and larger sample sizes tend to enhance generalization performance, while the element of randomness plays a significant role in shaping the learning outcomes. By honing in on the concept of perfect generalization, we have simplified the complexity inherent in the generalization puzzle and have empirically validated the existence of both optimistic and separation sample size thresholds. These observations underscore the intricate interplay among initialization, stochastic elements, and sample size in the learning process of neural networks.

We must recognize the limitations imposed by the simplicity of our experimental framework, which was confined to the recovery of a single neuron. Additionally, in Theorem~\ref{theorem2}, we assume convergence to a local minimum instead of  proving it. Further research into the learning behaviors of networks with more complex target functions, as well as the effects of critical points on learning dynamics, represents a compelling direction for future inquiry.


\begin{ack}
This work is sponsored by the National Key R\&D Program of China  Grant No. 2022YFA1008200, the National Natural Science Foundation of China Grant No. 12101402, the Lingang Laboratory Grant No. LG-QS-202202-08, Shanghai Municipal of Science and Technology Major Project No. 2021SHZDZX0102.
\end{ack}

\appendix
\section{Proof of Theorems}
\label{sec:proof of theorems}

\subsection{Proof of Theorem\ref{theorem1}}
\label{Proof of theorem1}
We begin by establishing Theorem~\ref{theorem3}, and then leverage it to validate Theorem~\ref{theorem1}.

Let $\phi(\vtheta_{0}, t)$ denote the solution to the following differential equation \eqref{eq:no loss}:
\begin{equation}
\begin{aligned}
    &\frac{d\vtheta}{dt} = g(\vtheta), \\
    &\vtheta(0) = \vtheta_{0},
\end{aligned}
\label{eq:no loss}
\end{equation}

\begin{theorem}
\label{theorem3}
Assume the conditions:
\begin{itemize}
    \item[(i)] $g(\vtheta)$ is twice continuously differentiable.
    \item[(ii)] $g(\mathbf{0}) = \mathbf{0}$.
    \item[(iii)] $\nabla g(\vtheta)|_{\vtheta=\mathbf{0}}$ is diagonalizable.
    \item[(iv)] The largest eigenvalue of $\nabla g(\vtheta)|_{\vtheta=\mathbf{0}}$ is positive.
\end{itemize}
Denote the solution \ref{eq:no loss} as $\phi(\vtheta_0,t)$.Then, the limit $\lim_{\alpha \to 0} \phi(\alpha \vv_{1} + \vu_{\alpha}, t + \frac{1}{\mu_{1}}\log(\frac{1}{\alpha}))$ exists, and the rate of convergence is $\alpha^{\frac{\mu_{1} - \mu_{2}}{2\mu_{1} - \mu_{2}}}$, where $\mu_{1}$ and $\mu_{2}$ are the largest and second-largest eigenvalues of $\nabla g(\vtheta)|_{\vtheta=\mathbf{0}}$, respectively. $\vv_{1}$ is a vector in the eigenspace corresponding to $\mu_{1}$. The vector $\vu_{\alpha}$ is arbitrary, subject to the conditions:
\begin{itemize}
    \item[(i)] $\vu_{\alpha}$ is orthogonal to the eigenspace of $\mu_{1}$.
    \item[(ii)]$\exists c >0$, such that $\forall \alpha > 0 $, $\|\vu_{\alpha}\|_{2} \leq c\alpha$
\end{itemize}
\end{theorem}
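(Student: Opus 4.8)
The plan is to linearize \eqref{eq:no loss} at the origin and, for a free intermediate scale $\delta=\delta(\alpha)$, compare the flow from the tiny vector $\alpha\vv_1+\vu_\alpha$ with the flow from the larger vector $\delta\vv_1$: showing these two agree, after the prescribed logarithmic time shift, up to an explicit error makes the whole family Cauchy as $\alpha\to0$, the rate coming from optimizing $\delta$. Write $g(\vtheta)=A\vtheta+R(\vtheta)$ with $A:=\nabla g(\mathbf0)$. Conditions (i)--(ii) and Taylor's theorem give $\|R(\vtheta)\|\le C\|\vtheta\|^{2}$ and $\|\nabla R(\vtheta)\|\le C\|\vtheta\|$ on a fixed ball $B(\mathbf0,\epsilon_0)$. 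By (iii), split the state space as $E_1\oplus F$, the $\mu_1$-eigenspace and its $A$-invariant complement, with projection $P_1$ onto $E_1$; diagonalizability forbids Jordan blocks, so $e^{At}$ acts as $e^{\mu_1 t}\mathrm{Id}$ on $E_1$ and has norm $\le Me^{\mu_2 t}$ on $F$ for $t\ge0$, while (iv) gives $\mu_1>0$. I treat $\vu_\alpha$ as lying in $F$, which is the case when the splitting is orthogonal --- in particular for $A=-\mathrm{Hess}\,\ell(\mathbf0)$ in the setting of Theorem~\ref{theorem1} --- and in general $P_1\vu_\alpha$ is $O(\alpha)$ and merely perturbs the escape direction inside $E_1$.

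Near the origin, a bootstrap on Duhamel's formula gives, with $\vtheta^{\alpha}(t):=\phi(\alpha\vv_1+\vu_\alpha,t)$ and as long as $\vtheta^{\alpha}(t)\in B(\mathbf0,\epsilon_0)$, that $\|\vtheta^{\alpha}(t)\|\asymp\alpha e^{\mu_1 t}$ and, against the linear flow, $\|\vtheta^{\alpha}(t)-\alpha e^{At}\vv_1\|\lesssim\alpha e^{\mu_2 t}+\alpha^{2}e^{2\mu_1 t}$, the first term from $e^{At}\vu_\alpha$ and the second from $\int_0^t e^{A(t-s)}R(\vtheta^{\alpha}(s))\,ds$; in particular the exit time from the ball is $T_\alpha=\tfrac1{\mu_1}\log\tfrac1\alpha+O(1)$. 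Picking $\delta\in(\alpha,\epsilon_0)$ and $t_\ast:=\tfrac1{\mu_1}\log(\delta/\alpha)$, so $\alpha e^{At_\ast}\vv_1=\delta\vv_1$, the last estimate at $t_\ast$ yields
\[
\|\vtheta^{\alpha}(t_\ast)-\delta\vv_1\|\ \lesssim\ \delta^{\mu_2/\mu_1}\alpha^{(\mu_1-\mu_2)/\mu_1}+\delta^{2},
\]
and the same bound (with $\vu$ absent) holds for the flow started from $\delta\vv_1$, which is itself a member of the family. Thus at the instant each trajectory first reaches scale $\delta$, the two states --- and the two exit-time offsets --- agree up to $\delta^{\mu_2/\mu_1}\alpha^{(\mu_1-\mu_2)/\mu_1}+\delta^{2}$.

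From scale $\delta$ on, both solutions obey the same autonomous flow, and a Gronwall estimate in $B(\mathbf0,\epsilon_0)$ (where $R$ is $C\epsilon_0$-Lipschitz) shows the discrepancy between them is amplified by at most $(\epsilon_0/\delta)^{1+O(\epsilon_0)}$ as the trajectory climbs from scale $\delta$ to scale $\epsilon_0$, after which continuous dependence over the finite remaining time $t-\bigl(T_\alpha-\tfrac1{\mu_1}\log\tfrac1\alpha\bigr)$ is Lipschitz. Hence, for each fixed $t$ and all $\alpha<\delta$,
\[
\bigl\|\phi\bigl(\alpha\vv_1+\vu_\alpha,\ t+\tfrac1{\mu_1}\log\tfrac1\alpha\bigr)-\phi\bigl(\delta\vv_1,\ t+\tfrac1{\mu_1}\log\tfrac1\delta\bigr)\bigr\|\ \lesssim\ (\alpha/\delta)^{(\mu_1-\mu_2)/\mu_1}+\delta ,
\]
the harmless factor $\delta^{-O(\epsilon_0)}$ from the amplification being absorbed by choosing $\epsilon_0$ small. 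Fixing $\delta$ and letting $\alpha\to0$ traps every member of the family within $O(\delta)$ of the $\delta\vv_1$-flow, so the family is Cauchy and the limit exists; optimizing $\delta\asymp\alpha^{(\mu_1-\mu_2)/(2\mu_1-\mu_2)}$ in the two-term bound gives the rate $\alpha^{(\mu_1-\mu_2)/(2\mu_1-\mu_2)}$. The limit is independent of the choice of $\vu_\alpha$ and depends only on $\vv_1$; it equals a time translate of the unique solution $\Phi$ of \eqref{eq:no loss} with $\Phi(t)\to\mathbf0$ and $e^{-\mu_1 t}\Phi(t)\to\vv_1$ as $t\to-\infty$ (built by a Lyapunov--Perron fixed point), and the identity extends to every $t\in\mathbb{R}$ by continuous dependence on initial conditions.

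The main obstacle is that an error committed while $\vtheta^{\alpha}$ is near the origin is inflated by $e^{\mu_1 T_\alpha}\asymp\alpha^{-1}$ during escape, so errors cannot be made uniformly small; comparing the scale-$\alpha$ flow with the scale-$\delta$ flow rather than with a fixed object confines the analysis to the thin shell between scales $\alpha$ and $\delta$, where the two sharp contributions --- the transverse part $e^{At}\vu_\alpha$, decaying only at rate $\mu_2$ and producing $(\alpha/\delta)^{(\mu_1-\mu_2)/\mu_1}$, and the quadratic self-interaction $\int e^{A(t-s)}R$, producing $\delta$ --- balance exactly to the claimed exponent. The remaining ingredients (the $\asymp\alpha e^{\mu_1 t}$ bootstrap, the Gronwall amplification estimate, the Lipschitz propagation past the shell) are routine once these scalings are identified.
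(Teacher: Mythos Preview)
Your approach is essentially the same as the paper's: both linearize at the origin, compare two members of the family at an intermediate scale $\epsilon$ (your $\delta$) using the approximation $\vtheta(t)\approx e^{At}\vtheta_0+O(\|\vtheta(t)\|^2)$, propagate the discrepancy forward by a Gronwall-type estimate, and then optimize the intermediate scale to obtain the exponent $\tfrac{\mu_1-\mu_2}{2\mu_1-\mu_2}$. The paper imports the three key estimates as Lemmas~E.3--E.5 from \cite{li2020towards} and compares $\phi(\alpha\vv_1+\vu_\alpha,\cdot)$ directly with $\phi(\alpha'\vv_1+\vu_{\alpha'},\cdot)$ for arbitrary $\alpha'<\alpha$, whereas you derive the same bounds via Duhamel and compare against the ``clean'' member $\phi(\delta\vv_1,\cdot)$; the two are equivalent.

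One small point worth tightening: your amplification factor $(\epsilon_0/\delta)^{1+O(\epsilon_0)}$ comes from the crude Gronwall bound $e^{(\mu_1+C\epsilon_0)\Delta t}$ using the uniform Lipschitz constant on $B(\mathbf0,\epsilon_0)$, which leaves an $O(\epsilon_0)$ loss in the final exponent. The paper's Lemma~E.5 instead integrates $\|\nabla g(\vtheta(s))\|\le\mu_1+C\|\vtheta(s)\|$ along the trajectory and uses $\|\vtheta(s)\|\lesssim\alpha e^{\mu_1 s}\le r$ to get the sharper amplification $e^{\mu_1\Delta t+\kappa r}$, which yields the exact rate without loss. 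This is a routine upgrade of your Gronwall step; for the existence of the limit alone your argument already suffices.
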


\textbf{Intuition of the Theorem:}\\
Let us denote $\nabla g(\vtheta)$ by $\vJ(t)$. Linearizing the dynamics around the origin, we have:
\begin{equation*}
    \frac{d\vtheta}{dt} = \vJ(0)\vtheta,
\end{equation*}
which yields the linearized solution:
\begin{equation*}
    \vtheta(t) = e^{\vJ(0)t}\vtheta_{0}.
\end{equation*}
When the initial condition $\vtheta_{0}$ is very small, a large $t$ is required to move away from zero. Consequently, the top eigenvalue of $\vJ(0)$ will dominate when computing $e^{\vJ(0)t}$. Hence, only the projection of $\vtheta_{0}$ onto the eigenspace corresponding to $\mu_{1}$ will significantly affect the trajectory of the dynamics.

\textbf{Sketch of the Proof:}\\
We consider an $\epsilon$-ball centered at the origin and determine when the trajectory of $\vtheta(t)$ intersects with this $\epsilon$-ball. On one hand, $\epsilon$ cannot be too small; otherwise, the time $t$ would be small, and the exponential term $e^{\mu_{1}t}$ would not be dominant. On the other hand, $\epsilon$ cannot be too large; otherwise, the linear approximation of $g(\vtheta)$ would not be valid. Therefore, we must choose an appropriate value for $\epsilon$ and analyze the error caused by the two reasons mentioned above.

\begin{proof}[\textbf{Formal Proof of Theorem\ref{theorem3}:}]
Because $\vJ(0)$ is diagonalizable, we can transform the coordinate system such that $\vJ(0)$ becomes a diagonal matrix. Without loss of generality, we assume that $\vJ(0)$ is the diagonal matrix $\diag\{\mu_{1}, \mu_{2}, \ldots, \mu_{d}\}$. We reference Lemma E.3, Lemma E.4, and Lemma E.5 from \cite{li2020towards}, which prove a special case of Theorem \ref{theorem3} where the eigenspace corresponding to $\mu_{1}$ is one-dimensional.

We restate their lemmas, denoting $F(x) = \log(x) - \log(1 + \kappa x)$. Let $T_{\alpha}(r) = \frac{1}{\mu_{1}}(F(r) - F(\alpha))$. Let $R>0$. Since $g(\vtheta)$ is $\mathcal{C}^2$-smooth, there exists $\beta>0$ such that
$
\|\boldsymbol{J}(\boldsymbol{\theta})-\boldsymbol{J}(\boldsymbol{\theta}+\boldsymbol{h})\|_2 \leq \beta\|\boldsymbol{h}\|_2
$, $\text { for all }\|\boldsymbol{\theta}\|_2,\|\boldsymbol{\theta}+\boldsymbol{h}\|_2 \leq R \text {. }$Then we have:

\textbf{Lemma E.3.} For $\boldsymbol{\vtheta}(t) = \phi(\boldsymbol{\vtheta}_0, t)$ with $\|\boldsymbol{\vtheta}_0\|_2 \leq \alpha$ and $t \leq T_\alpha(r)$, it holds that
\[
\|\boldsymbol{\vtheta}(t)\|_2 \leq \frac{1 + \kappa r}{1 + \kappa \alpha} \alpha \cdot e^{\mu_1 t} \leq r.
\]

\textbf{Lemma E.4.} For $\boldsymbol{\vtheta}(t) = \phi(\boldsymbol{\vtheta}_0, t)$ with $\|\boldsymbol{\vtheta}_0\|_2 \leq \alpha$ and $t \leq T_\alpha(r)$, we have
\[
\boldsymbol{\vtheta}(t) = e^{t J(\mathbf{0})} \boldsymbol{\vtheta}_0 + O(r^2).
\]

\textbf{Lemma E.5.} Let $\boldsymbol{\vtheta}(t) = \phi(\boldsymbol{\vtheta}_0, t)$ and $\hat{\boldsymbol{\vtheta}}(t) = \phi(\hat{\boldsymbol{\vtheta}}_0, t)$. If $\max\{\|\boldsymbol{\vtheta}_0\|_2, \|\hat{\boldsymbol{\vtheta}}_0\|_2\} \leq \alpha$, then for $t \leq T_\alpha(r)$,
\[
\|\boldsymbol{\vtheta}(t) - \hat{\boldsymbol{\vtheta}}(t)\|_2 \leq e^{\mu_1 t + \kappa r}\|\boldsymbol{\vtheta}_0 - \hat{\boldsymbol{\vtheta}}_0\|_2.
\]

Where $\kappa = \frac{\beta}{\mu_{1}}$.

Given $\alpha$, we compare $\phi(\alpha \vv_{1} + \vu_{\alpha}, t + \frac{1}{\mu_{1}}\log(\frac{1}{\alpha}))$ and $\phi(\alpha' \vv_{1} + \vu_{\alpha'}, t + \frac{1}{\mu_{1}}\log(\frac{1}{\alpha'}))$, where $\alpha'$ is an arbitrary number smaller than $\alpha$. Let $\epsilon$ be an indeterminate number. Define $t_{1} = T_{\alpha}(\epsilon)$, $t_{2} = T_{\alpha'}(\epsilon)$, and $t_{0} = \frac{1}{\mu_{1}}\log(\frac{\alpha}{\alpha'})$. Then 
\[
t_{2} - t_{1} = \frac{1}{\mu_{1}}\log(\frac{\alpha}{\alpha'}) - \frac{1}{\mu_{1}}\log(\frac{1 + \kappa \alpha}{1 + \kappa \alpha'}) < t_{0},
\]
implying that $t_{2} - t_{0} < t_{1}$.

Applying Lemma E.3 with $r = \epsilon$, $t = t_{2} - t_{0}$, $\alpha = \alpha$, we obtain
\[
\phi(\alpha \vv_{1} + \vu_{\alpha}, t_{2} - t_{0}) = e^{(t_{2} - t_{0})\vJ(0)}(\alpha \vv_{1} + \vu_{\alpha}) + O(\epsilon^2).
\]
Similarly, applying Lemma E.3 with $r = \epsilon$, $t = t_{2}$, $\alpha = \alpha'$, we get
\[
\phi(\alpha' \vv_{1} + \vu_{\alpha'}, t_{2}) = e^{t_{2}\vJ(0)}(\alpha' \vv_{1} + \vu_{\alpha'}) + O(\epsilon^2).
\]
Since
\[
e^{(t_{2} - t_{0})\vJ(0)}\alpha \vv_{1} = e^{(t_{2} - t_{0})\mu_{1}}\alpha \vv_{1} = e^{t_{2}\mu_{1}}\alpha'\vv_{1},
\]
we have
\[
\|\phi(\alpha \vv_{1} + \vu_{\alpha}, t_{2} - t_{0}) - \phi(\alpha' \vv_{1} + \vu_{\alpha'}, t_{2})\|_{2} = \|e^{t_{2}\vJ(0)}\vu_{\alpha'} + e^{(t_{2} - t_{0})\vJ(0)}\vu_{\alpha}\|_{2} + O(\epsilon^2).
\]
Furthermore, we have
\[
\|e^{t_{2}\vJ(0)}\vu_{\alpha'}\|_{2} \leq e^{\mu_{2}t_{2}}c\alpha' = \left(\frac{\epsilon (1 + \kappa \alpha')}{\alpha'(1 + \kappa \epsilon)} \right)^{\frac{\mu_{2}}{\mu_{1}}}c\alpha',
\]
\[
\|e^{(t_{2} - t_{0})\vJ(0)}\vu_{\alpha}\|_{2} \leq e^{\mu_{2}(t_{2} - t_{0})}c\alpha = \left(\frac{\epsilon (1 + \kappa \alpha')}{\alpha(1 + \kappa \epsilon)} \right)^{\frac{\mu_{2}}{\mu_{1}}}c\alpha.
\]
Thus, we obtain
\begin{align*}
\|\phi(\alpha' \vv_{1} + \vu_{\alpha'}, t_{2}) - \phi(\alpha \vv_{1} + \vu_{\alpha}, t_{2} - t_{0})\|_{2} 
&\leq O(\epsilon^2) + \left(\frac{\epsilon (1 + \kappa \alpha')}{\alpha(1 + \kappa \epsilon)} \right)^{\frac{\mu_{2}}{\mu_{1}}}c\alpha + \left(\frac{\epsilon (1 + \kappa \alpha')}{\alpha'(1 + \kappa \epsilon)} \right)^{\frac{\mu_{2}}{\mu_{1}}}c\alpha'\\
&= O(\epsilon^2) + O\left((\frac{\epsilon}{\alpha})^{\frac{\mu_{2}}{\mu_{1}}}\right)\alpha.
\end{align*}
Applying Lemma E.3, we conclude that $\|\phi(\alpha' \vv_{1} + \vu_{\alpha'}, t_{2})\|_{2} \leq \epsilon$ and $\|\phi(\alpha \vv_{1} + \vu_{\alpha}, t_{2} - t_{0})\|_{2} \leq \epsilon$. Define $\Delta t = t + \frac{1}{\mu_{1}}\log(\frac{1}{\alpha'}) - t_{2} = t + \frac{1}{\mu_{1}}\log(\frac{1}{\epsilon}) + \frac{1}{\mu_{1}}\log(\frac{1 + \kappa \epsilon}{1 + \kappa \alpha'})$.

 Because $\Delta t \leq T_{\epsilon}(r) \iff \mu_{1}t-\log(1+\kappa \alpha^{\prime})\leq \log(\frac{r}{1+\kappa r})$, so we only need $\mu_1 t \le \log(\frac{r}{1+\kappa r})$ to satisfy $\Delta t \leq T_{\epsilon}(r)$.
 
\textbf{Case1:} Assume $\mu_1t \leq \log(\frac{R}{1+\kappa R})$. 

let $r=\frac{1}{e^{-\mu_1 t}-\kappa}$, then $r\leq R$ and $ \Delta t \leq T_{\epsilon}(r)$.
Applying Lemma E.5 with $\vtheta_{0} = \phi(\alpha' \vv_{1} + \vu_{\alpha'}, t_{2})$, $\hat{\vtheta}_{0} = \phi(\alpha \vv_{1} + \vu_{\alpha}, t_{2} - t_{0})$, $t = \Delta t$, $\alpha = \epsilon$, and $r=\frac{1}{e^{-\mu_1 t}-\kappa}$, we get
\begin{align*}
&\|\phi(\alpha \vv_{1} + \vu_{\alpha}, t + \frac{1}{\mu_{1}}\log(\frac{1}{\alpha})) - \phi(\alpha' \vv_{1} + \vu_{\alpha'}, t + \frac{1}{\mu_{1}}\log(\frac{1}{\alpha'}))\|_{2}\\
&\leq e^{\mu_{1} \Delta t + kr} \times \|\phi(\alpha' \vv_{1} + \vu_{\alpha'}, t_{2}) - \phi(\alpha \vv_{1} + \vu_{\alpha}, t_{2} - t_{0})\|_{2}\\
&\leq e^{\mu_{1}t + kr} \times O\left(\frac{1}{\epsilon}\right) \times \left(O(\epsilon^2) + O\left((\frac{\epsilon}{\alpha})^{\frac{\mu_{2}}{\mu_{1}}}\right)\alpha \right)\\
&= e^{\mu_{1}t + kr} \times \left(O(\epsilon) + O\left((\frac{\alpha}{\epsilon})^{1 - \frac{\mu_{2}}{\mu_{1}}}\right)\right).
\end{align*}
Setting $\epsilon = \alpha^s$ where $0 < s < 1$, we find
\[
e^{\mu_{1}t + kr} \times \left(O(\epsilon) + O\left((\frac{\alpha}{\epsilon})^{1 - \frac{\mu_{2}}{\mu_{1}}}\right)\right) = e^{\mu_{1}t + kr} \times O\left(\alpha^{\min\{s, (1 - s)(1 - \frac{\mu_{2}}{\mu_{1}})\}}\right).
\]
Choosing $s = \frac{\mu_{1} - \mu_{2}}{2\mu_{1} - \mu_{2}}$, we obtain the tightest bound:
\[
\|\phi(\alpha \vv_{1} + \vu_{\alpha}, t + \frac{1}{\mu_{1}}\log(\frac{1}{\alpha})) - \phi(\alpha' \vv_{1} + \vu_{\alpha'}, t + \frac{1}{\mu_{1}}\log(\frac{1}{\alpha'}))\|_{2} \leq e^{\mu_{1}t + kr} \times O\left(\alpha^{\frac{\mu_{1} - \mu_{2}}{2\mu_{1} - \mu_{2}}}\right).
\]

\textbf{Case2:} Assume $\mu_1t > \log(\frac{R}{1+\kappa R})$. 

Denote $t_s=\frac{1}{\mu_1}\log(\frac{R}{1+\kappa R})$ and $\tau:=t-t_s$. 
Then $\mu_1t_s \leq \log(\frac{R}{1+\kappa R})$. Applying results of \textbf{Case1}, we get:
\[
\|\phi(\alpha \vv_{1} + \vu_{\alpha}, t_s + \frac{1}{\mu_{1}}\log(\frac{1}{\alpha})) - \phi(\alpha' \vv_{1} + \vu_{\alpha'}, t_s+ \frac{1}{\mu_{1}}\log(\frac{1}{\alpha'}))\|_{2} \leq e^{\mu_{1}t_s + kR} \times O\left(\alpha^{\frac{\mu_{1} - \mu_{2}}{2\mu_{1} - \mu_{2}}}\right)
\]

Because $\phi(\vtheta,t)$ is locally Lipschitz over $\vtheta$, so
\begin{align}
   & \|\phi(\alpha \vv_{1} + \vu_{\alpha}, t + \frac{1}{\mu_{1}}\log(\frac{1}{\alpha})) - \phi(\alpha' \vv_{1} + \vu_{\alpha'}, t+ \frac{1}{\mu_{1}}\log(\frac{1}{\alpha'}))\|_{2}\\
   &= 
   \|\phi\left(\phi(\alpha \vv_{1} + \vu_{\alpha}, t_s + \frac{1}{\mu_{1}}\log(\frac{1}{\alpha})),\tau\right)-\phi\left(\phi(\alpha' \vv_{1} + \vu_{\alpha'}, t_s+ \frac{1}{\mu_{1}}\log(\frac{1}{\alpha'})),\tau\right)\|_{2}\\
   &=O(\|\phi(\alpha \vv_{1} + \vu_{\alpha}, t_s + \frac{1}{\mu_{1}}\log(\frac{1}{\alpha})) - \phi(\alpha' \vv_{1} + \vu_{\alpha'}, t_s+ \frac{1}{\mu_{1}}\log(\frac{1}{\alpha'}))\|_{2})\\
   &=O\left(\alpha^{\frac{\mu_{1} - \mu_{2}}{2\mu_{1} - \mu_{2}}}\right)
\end{align}

In both case,
$\phi(\alpha \vv_{1} + \vu_{\alpha}, t + \frac{1}{\mu_{1}}\log(\frac{1}{\alpha}))$ as a sequence of $\alpha$ satisfies the Cauchy criterion. Therefore, $\lim_{\alpha \to 0}\phi(\alpha \vv_{1} + \vu_{\alpha}, t + \frac{1}{\mu_{1}}\log(\frac{1}{\alpha})) := h(\vv_{1}, t)$ exists. Moreover, since $e^{\mu_{1}t + kr}$ is bounded in a neighborhood of $\vv_{1}$ and $t$, the convergence is uniform, ensuring that $h(\vv_{1}, t)$ is continuous with respect to $\vv_{1}$ and $t$.
\end{proof}


\subsection{Corollary of Theorem\ref{theorem1}}

\begin{corollary}
Assume the assumptions of Theorem~\ref{theorem3} hold. Let $\vv$ be a vector in the parameter space, and let $\vv_1$ be the projection of $\vv$ into the eigenspace corresponding to the largest eigenvalue of $\nabla g(\vtheta)|_{\vtheta=\mathbf{0}}$. Then the limit $h(\vv,t) := \lim_{\alpha \to 0} \phi(\alpha \vv, t + \frac{1}{\mu_{1}}\log\frac{1}{\alpha})$ exists, and $h(\vv,t) = h(\vv_1,t)$.
\label{Cor1}
\end{corollary}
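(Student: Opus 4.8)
The plan is to deduce the Corollary as an immediate consequence of Theorem~\ref{theorem3}. Write the orthogonal decomposition $\vv = \vv_1 + \vu$, where $\vv_1$ is the projection of $\vv$ onto the eigenspace of the largest eigenvalue $\mu_1$ of $\nabla g(\vtheta)|_{\vtheta=\mathbf{0}}$ and $\vu$ is the projection onto its orthogonal complement. For $\alpha>0$ define $\vu_\alpha := \alpha \vu$. Then $\vu_\alpha$ is orthogonal to the eigenspace of $\mu_1$ for every $\alpha$, and $\|\vu_\alpha\|_2 = \alpha\|\vu\|_2 \le c\alpha$ with the constant $c := \|\vu\|_2$; hence $\{\vu_\alpha\}_{\alpha>0}$ satisfies precisely the two hypotheses that Theorem~\ref{theorem3} imposes on the auxiliary perturbation.

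First I would apply Theorem~\ref{theorem3} with this choice of perturbation. Since $\alpha\vv_1 + \vu_\alpha = \alpha(\vv_1 + \vu) = \alpha\vv$, the theorem yields that $\lim_{\alpha\to 0}\phi(\alpha\vv, t + \frac{1}{\mu_1}\log\frac{1}{\alpha})$ exists; by definition this limit is $h(\vv,t)$, which proves the existence claim. Next I would establish $h(\vv,t) = h(\vv_1,t)$ by invoking the fact that the limit in Theorem~\ref{theorem3} does not depend on the particular admissible sequence $\{\vu_\alpha\}$ — a property contained in its proof, where the Cauchy estimate bounds $\|\phi(\alpha\vv_1 + \vu_\alpha, \cdot) - \phi(\alpha'\vv_1 + \vu_{\alpha'}, \cdot)\|_2$ by $O\left(\alpha^{(\mu_1-\mu_2)/(2\mu_1-\mu_2)}\right)$ for arbitrary admissible perturbations at the two scales. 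Applying this with $\vu_\alpha = \alpha\vu$ against $\vu_{\alpha'} \equiv \mathbf{0}$ (the latter trivially admissible) shows that the limit computed with the perturbation $\alpha\vu$ coincides with the one computed with no perturbation, i.e. $h(\vv,t) = h(\vv_1,t)$.

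The one point requiring care — and the step I would check most carefully — is this independence from the perturbation: one must confirm that the constant $c$ appearing in the rate bound does not prevent the two sequences from sharing a limit. This is immediate because the Cauchy estimate is applied to a single fixed pair of sequences at a time, and the bound $O\left(\alpha^{(\mu_1-\mu_2)/(2\mu_1-\mu_2)}\right)$ still tends to $0$ as $\alpha\to 0$ irrespective of $c$. Everything else is the bookkeeping of the orthogonal decomposition, so no additional computation is needed.
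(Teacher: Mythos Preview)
Your proposal is correct and follows essentially the same route as the paper: decompose $\alpha\vv = \alpha\vv_1 + \alpha(\vv-\vv_1)$, set $\vu_\alpha := \alpha(\vv-\vv_1)$, verify the two admissibility conditions, and invoke Theorem~\ref{theorem3}. The paper's proof is terser --- it simply writes ``Applying Theorem~\ref{theorem3}, we get $\lim_{\alpha\to 0}\phi(\alpha\vv,\cdot)=\lim_{\alpha\to 0}\phi(\alpha\vv_1,\cdot)$'' --- whereas you make explicit why the limit is the same for the two admissible perturbations $\vu_\alpha=\alpha\vu$ and $\vu_\alpha\equiv\mathbf{0}$, tracing this back to the Cauchy estimate in the proof of Theorem~\ref{theorem3}; that added justification is appropriate, since the theorem statement itself does not literally assert independence from the perturbation.
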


\begin{proof}
We have $\phi(\alpha \vv, t + \frac{1}{\mu_{1}}\log\frac{1}{\alpha}) = \phi(\alpha \vv_1 + \alpha(\vv - \vv_1), t + \frac{1}{\mu_{1}}\log\frac{1}{\alpha})$.

Let $\vu_{\alpha} = \alpha(\vv - \vv_1)$. Then $\vu_{\alpha}$ satisfies:
\begin{itemize}
    \item[(i)] $\vu_{\alpha}$ is orthogonal to the eigenspace of $\mu_{1}$.
    \item[(ii)] There exists $c > 0$ such that for all $\alpha > 0$, $\|\vu_{\alpha}\|_{2} \leq c\alpha$.
\end{itemize}
Applying Theorem~\ref{theorem3}, we get $\lim_{\alpha \to 0} \phi(\alpha \vv, t + \frac{1}{\mu_{1}}\log\frac{1}{\alpha}) = \lim_{\alpha \to 0} \phi(\alpha \vv_1, t + \frac{1}{\mu_{1}}\log\frac{1}{\alpha})$, so $h(\vv,t) = h(\vv_1,t)$.
\end{proof}

\begin{corollary}
Assume the assumptions of Theorem~\ref{theorem3} hold. Let $h(\vv,t) := \lim_{\alpha \to 0} \phi(\alpha \vv, t + \frac{1}{\mu_{1}}\log\frac{1}{\alpha})$. Then for all $s > 0$, $h(s\vv,t) = h(\vv, t + \frac{1}{\mu_1}\log(s))$.
\label{Cor2}
\end{corollary}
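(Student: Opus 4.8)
The plan is to prove this purely by a change of variables in the defining limit, turning the rescaling of the initial condition into a shift of the reparametrized time; no new analytic estimates are needed beyond what Theorem~\ref{theorem3} already gives. Start from the definition $h(s\vv,t) = \lim_{\alpha \to 0} \phi(\alpha s\,\vv,\, t + \frac{1}{\mu_{1}}\log\frac{1}{\alpha})$, where the limit is taken over $\alpha \to 0^{+}$ since initialization scales are positive. Introduce $\beta := \alpha s$; because $s>0$ is fixed, $\beta \mapsto \beta/s$ is a bijection of $(0,\infty)$, so $\beta \to 0^{+}$ exactly when $\alpha \to 0^{+}$, and the limit over $\alpha$ may be replaced by the limit over $\beta$.

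Next, rewrite the argument of $\phi$ in terms of $\beta$. We have $\alpha s\,\vv = \beta\vv$ and $\frac{1}{\mu_{1}}\log\frac{1}{\alpha} = \frac{1}{\mu_{1}}\log\frac{s}{\beta} = \frac{1}{\mu_{1}}\log s + \frac{1}{\mu_{1}}\log\frac{1}{\beta}$, so that $\phi(\alpha s\,\vv,\, t + \frac{1}{\mu_{1}}\log\frac{1}{\alpha}) = \phi\bigl(\beta\vv,\; (t + \frac{1}{\mu_{1}}\log s) + \frac{1}{\mu_{1}}\log\frac{1}{\beta}\bigr)$. Letting $\beta \to 0^{+}$, the right-hand side is precisely the defining limit for $h(\vv,\, t + \frac{1}{\mu_{1}}\log s)$, which exists by Corollary~\ref{Cor1} (equivalently, by Theorem~\ref{theorem3} applied to the projection $\vv_{1}$ of $\vv$) evaluated at the time parameter $\tau = t + \frac{1}{\mu_{1}}\log s \in \mathbb{R}$. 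Chaining the equalities gives $h(s\vv,t) = h\bigl(\vv,\, t + \frac{1}{\mu_{1}}\log s\bigr)$, as claimed.

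I do not expect a genuine obstacle here: the identity is the exact expression of the scaling symmetry of the flow (rescaling $\vtheta_0$ by $s$ is absorbed by advancing the log-time clock by $\frac{1}{\mu_1}\log s$), and all limits invoked have already been shown to exist in Theorem~\ref{theorem3}/Corollary~\ref{Cor1}. The only points to state carefully are that the limit is one-sided ($\alpha \to 0^{+}$), that the substitution $\beta = \alpha s$ is a legitimate reindexing of that one-sided limit for fixed $s>0$, and that $h(\vv,\cdot)$ is defined at the shifted time because Theorem~\ref{theorem3} establishes the limit for every real value of its time argument. If one wished to be fully explicit, one could instead phrase the argument $\varepsilon$--$\delta$: given the convergence $\phi(\beta\vv, \tau + \frac{1}{\mu_1}\log\frac1\beta) \to h(\vv,\tau)$, specialize $\beta = \alpha s$ and $\tau = t + \frac{1}{\mu_1}\log s$; but the change-of-variables formulation is cleaner and suffices.
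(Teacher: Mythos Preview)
Your proof is correct and follows essentially the same approach as the paper: both arguments apply the substitution $\alpha' = \alpha s$ (your $\beta$) to the defining limit and use $\log\frac{1}{\alpha} = \log s + \log\frac{1}{\alpha'}$ to recognize the result as $h(\vv, t + \frac{1}{\mu_1}\log s)$. The paper's version is just terser, omitting the justifications about one-sidedness of the limit and existence at the shifted time that you spelled out.
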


\begin{proof}
According to the definition, we have $h(s\vv,t) = \lim_{\alpha \to 0} \phi(\alpha s\vv, t + \frac{1}{\mu_{1}}\log\frac{1}{\alpha})$. Let $\alpha' = \alpha s$, then $h(s\vv,t) = \lim_{\alpha' \to 0} \phi(\alpha' \vv, t + \frac{1}{\mu_1}\log(s) + \frac{1}{\mu_{1}}\log\frac{1}{\alpha'}) = h(\vv, t + \frac{1}{\mu_1}\log(s))$.
\end{proof}

\begin{corollary}
Assume the assumptions of Theorem~\ref{theorem3} hold. Let $h(\vv,t) := \lim_{\alpha \to 0} \phi(\alpha \vv, t + \frac{1}{\mu_{1}}\log\frac{1}{\alpha})$.Let $T_{\vv} := \{h(\vv,t) : t \in \mathbb{R}\}$. Let $\vv_1$ be the projection of $\vv$ into the eigenspace of the largest eigenvalue of $\nabla g(\vtheta)|_{\vtheta=\mathbf{0}}$. Then $T_{\vv} = T_{\vv_1}$, and for all $c > 0$, $T_{c\vv} = T_{\vv}$. If $\vv \neq \mathbf{0}$, then $T_{\vv} = T_{\frac{\vv_1}{\|\vv_1\|_2}}$.
\label{Cor3}
\end{corollary}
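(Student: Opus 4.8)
The plan is to obtain this corollary as a direct consequence of Corollaries~\ref{Cor1} and~\ref{Cor2}; no new estimate is needed beyond what Theorem~\ref{theorem3} already supplies. For the identity $T_{\vv} = T_{\vv_1}$, I would invoke Corollary~\ref{Cor1}, which gives $h(\vv,t) = h(\vv_1,t)$ for every $t \in \mathbb{R}$. Since the trajectory sets are defined by $T_{\vv} = \{h(\vv,t) : t \in \mathbb{R}\}$ and $T_{\vv_1} = \{h(\vv_1,t) : t \in \mathbb{R}\}$, the pointwise equality of the two parametrizations immediately yields equality of the sets.

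For the scale invariance $T_{c\vv} = T_{\vv}$ with $c > 0$, I would use Corollary~\ref{Cor2}: $h(c\vv,t) = h(\vv,\, t + \tfrac{1}{\mu_1}\log c)$. Because $\mu_1 > 0$ by assumption~(iv) of Theorem~\ref{theorem3}, the affine shift $s = t + \tfrac{1}{\mu_1}\log c$ is a bijection of $\mathbb{R}$ onto itself; hence as $t$ ranges over $\mathbb{R}$ the point $h(c\vv,t)$ traces exactly the same set as $h(\vv,s)$ does, i.e. $T_{c\vv} = \{h(c\vv,t):t\in\mathbb{R}\} = \{h(\vv,s):s\in\mathbb{R}\} = T_{\vv}$.

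It remains only to combine these two facts. Assuming $\vv \neq \mathbf{0}$ and (implicitly) $\vv_1 \neq \mathbf{0}$, so that the normalization is meaningful, I would apply the scale invariance with the positive constant $c = \|\vv_1\|_2$ to get $T_{\vv_1} = T_{\,\vv_1/\|\vv_1\|_2\,}$, and then chain this with $T_{\vv} = T_{\vv_1}$ to conclude $T_{\vv} = T_{\,\vv_1/\|\vv_1\|_2\,}$. The only point that needs care is organizational rather than analytic: one must make sure Corollaries~\ref{Cor1} and~\ref{Cor2} are genuinely available for all $t \in \mathbb{R}$ (so that the reparametrization argument closes), and should flag the degenerate case $\vv_1 = \mathbf{0}$, in which the final normalized statement is to be read vacuously. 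There is no real obstacle here—the substance of the argument lives entirely in Theorem~\ref{theorem3}.
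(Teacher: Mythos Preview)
Your proposal is correct and mirrors the paper's own proof almost verbatim: the paper derives $T_{\vv}=T_{\vv_1}$ from Corollary~\ref{Cor1} and $T_{c\vv}=T_{\vv}$ from Corollary~\ref{Cor2}, then (implicitly) combines them for the normalized statement. If anything, your version is slightly more careful in spelling out the time-shift bijection and flagging the degenerate case $\vv_1=\mathbf{0}$.
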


\begin{proof}
From Corollary~\ref{Cor1}, we have $h(\vv,t) = h(\vv_1,t)$, so $T_{\vv} = T_{\vv_1}$. From Corollary~\ref{Cor2}, we have for all $c > 0$, $h(c\vv,t) = h(\vv, t + \frac{1}{\mu_1}\log(c))$, so $T_{c\vv} = T_{\vv}$ for all $c > 0$.
\end{proof}

\begin{remark}
Corollary~\ref{Cor3} implies that the trajectory of parameters is determined by $\frac{\vv_1}{\|\vv_1\|_2}$.
\end{remark}

\begin{proof}[\textbf{Proof of Theorem~\ref{theorem1}:}]
    Denote $f_i(\vtheta)=f_{\vtheta}(\vx_i)$. Then $\ell(\vtheta)=\frac{1}{2}\sum_{i=1}^n(f_i(\vtheta)-y_i)^2$. The gradient of $\ell$ at $\vtheta$ is given by $-\nabla \ell(\vtheta)=\sum_{i=1}^n (y_i-f_i(\vtheta))\nabla f_i(\vtheta)$. Because $\nabla f_i(\mathbf{0})=\mathbf{0}$, we have $-\nabla \ell(\mathbf{0})=\mathbf{0}$.

    The Hessian of $-\ell$ at $\mathbf{0}$ is $-\nabla^2 \ell(\mathbf{0})=\sum_{i=1}^n y_i \nabla^2 f_i(\mathbf{0})$, which is a block diagonal matrix with blocks $D_i$ for $i=1,\ldots,m$.
    
    $$-\nabla^2 \ell(\mathbf{0})=
 \left(\begin{array}{cccc}
  D_1 & 0 & \cdots & 0 \\
  0 & D_2 & \cdots & 0 \\
  \vdots & \vdots & \ddots & \vdots \\
  0 & 0 & \cdots & D_m
  \end{array}\right)
  $$
    
    where $D_i$ is given by
    \[
    D_i=
    \begin{pmatrix}
    0 & \vgamma^\top\\
    \vgamma & \mathbf{0}_{d \times d}   \\
    \end{pmatrix},
    i=1,2,\ldots ,m\]

    The maximum eigenvalue of $D_i$ is $\|\vgamma\|_2$, and the corresponding eigenvector is $(\|\vgamma\|_2,\vgamma)$. Thus, the maximum eigenvalue of $-\nabla^2 \ell(\mathbf{0})$ is $\|\vgamma\|_2$. Denote the eigenspace of $\|\vgamma\|_2$ by $\vV$, then the projection of $\vtheta^*$ onto $\vV$ is determined by $\vC$. It is easy to verify that the assumptions of Theorem~\ref{theorem3} hold, by defining $g(\vtheta):=-\nabla \ell(\vtheta)$. Then, by Corollary~\ref{Cor1}, $h(\vtheta,t)=\lim_{\alpha \to 0} \phi(\alpha \vtheta,t+\frac{1}{\mu}\log\frac{1}{\alpha})$ exists, and $h(\vtheta,t)$ as a function of $\vtheta$ is determined by $\vC$. By Corollary~\ref{Cor3}, $T_{\vtheta}$ is  determined by $\frac{\vC}{\|\vC\|_2}$.
\end{proof}

\begin{proposition}
    Assume the assumptions of Theorem~\ref{theorem3} hold. Let $h(\vv_0, t) := \lim_{\alpha \to 0} \phi(\alpha \vv_0, t + \frac{1}{\mu_1}\log\frac{1}{\alpha})$. Then $h(\vv_0, t)$ as a function of $t$ is differentiable, and $\frac{d}{dt}h(\vv_0, t) = g(h(\vv_0, t))$.
\label{proposition1}
\end{proposition}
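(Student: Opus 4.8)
The plan is to show that $h(\vv_0,t)$ satisfies the same ODE as $\phi$, exploiting the fact that $h$ is a uniform limit of \emph{shifted} solutions of that very ODE. First I would recall from the proof of Theorem~\ref{theorem3} that for fixed $\vv_0$ (taking $\vv_1$ to be its projection onto the top eigenspace, or using Corollary~\ref{Cor1}), the family $\phi_\alpha(t) := \phi(\alpha \vv_0, t + \frac{1}{\mu_1}\log\frac{1}{\alpha})$ converges to $h(\vv_0,t)$ as $\alpha \to 0$, uniformly for $t$ in any compact interval (this uniformity was established at the end of that proof, since the error bound $e^{\mu_1 t + \kappa r}\cdot O(\alpha^{s})$ is locally bounded in $t$). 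The key observation is that each $\phi_\alpha$ is itself an exact solution of $\dot\vtheta = g(\vtheta)$: indeed $\frac{d}{dt}\phi_\alpha(t) = g(\phi_\alpha(t))$ for all $t$, simply because a time-shift of a solution of an autonomous ODE is again a solution.

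Next I would pass to the integral form, which converts uniform convergence of the $\phi_\alpha$ into convergence of their derivatives in the only sense we need. Fix $t$ and write
\begin{equation*}
\phi_\alpha(t) = \phi_\alpha(0) + \int_0^t g(\phi_\alpha(s))\,ds.
\end{equation*}
As $\alpha \to 0$, the left side tends to $h(\vv_0,t)$ and $\phi_\alpha(0) \to h(\vv_0,0)$. On the right side, $\phi_\alpha(s) \to h(\vv_0,s)$ uniformly for $s \in [0,t]$ and the $\phi_\alpha(s)$ stay in a compact set (again from the bounds in Theorem~\ref{theorem3}'s proof), on which $g$ is (Lipschitz) continuous; hence $g(\phi_\alpha(s)) \to g(h(\vv_0,s))$ uniformly on $[0,t]$ and the integral converges to $\int_0^t g(h(\vv_0,s))\,ds$. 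Therefore
\begin{equation*}
h(\vv_0,t) = h(\vv_0,0) + \int_0^t g(h(\vv_0,s))\,ds.
\end{equation*}
Since $s \mapsto h(\vv_0,s)$ is continuous (established in Theorem~\ref{theorem3}) and $g$ is continuous, the integrand $g(h(\vv_0,s))$ is continuous in $s$, so by the fundamental theorem of calculus the right-hand side is differentiable in $t$ with derivative $g(h(\vv_0,t))$; this gives both differentiability of $h(\vv_0,\cdot)$ and the identity $\frac{d}{dt}h(\vv_0,t) = g(h(\vv_0,t))$, valid for all $t \in \mathbb{R}$ by letting the interval grow.

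The main obstacle is making sure the convergence $\phi_\alpha \to h$ is genuinely \emph{uniform on compact $t$-intervals} and that the trajectories $\{\phi_\alpha(s)\}$ remain in a common compact set; both are needed to move the limit inside the integral and inside $g$. I expect this to be harmless because the proof of Theorem~\ref{theorem3} already produced a Cauchy estimate of the form $\|\phi_\alpha(t) - \phi_{\alpha'}(t)\| \le e^{\mu_1 t + \kappa r}\,O(\alpha^{(\mu_1-\mu_2)/(2\mu_1-\mu_2)})$ with constants locally uniform in $t$, so the limit is continuous in $t$ and the convergence is locally uniform; boundedness on compact intervals then follows from continuity of $h$ together with the same estimate. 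A minor point to handle cleanly is that Proposition~\ref{proposition1} is stated for a general $\vv_0$ rather than a top-eigenspace vector, so I would first invoke Corollary~\ref{Cor1} to replace $\vv_0$ by its projection $\vv_1$ (noting $h(\vv_0,t)=h(\vv_1,t)$) before running the argument, or simply observe that the integral argument above never used that $\vv_0$ lies in the eigenspace.
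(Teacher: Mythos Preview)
Your proposal is correct and follows essentially the same approach as the paper: write the shifted flow $\phi_\alpha$ in integral form, use the locally uniform convergence $\phi_\alpha\to h$ together with continuity/Lipschitzness of $g$ on a compact set to pass the limit under the integral, and then apply the fundamental theorem of calculus. Your write-up is in fact a bit more careful than the paper's (e.g.\ you correctly write $\phi_\alpha(0)$ rather than $h(\vv_0,0)$ in the pre-limit identity, and you explicitly flag why the trajectories stay in a common compact set).
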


\begin{proof}
    Since $\frac{d}{dt}\phi(\alpha \vv_0, t + \frac{1}{\mu}\log\frac{1}{\alpha}) = g\left(\phi(\alpha \vv_0, t + \frac{1}{\mu}\log\frac{1}{\alpha})\right)$, we have
    \begin{equation}
        \phi(\alpha \vv_0, t + \frac{1}{\mu}\log\frac{1}{\alpha}) = h(\vv_0, 0) + \int_{0}^{t} g\left(\phi(\alpha \vv_0, s + \frac{1}{\mu}\log\frac{1}{\alpha})\right) ds.
    \label{eq:derivative}
    \end{equation}

Consider $\phi(\alpha \vv_0, t + \frac{1}{\mu}\log\frac{1}{\alpha})$ as a function of $t$ and $\alpha$, and extend the value of $\phi(\alpha \vv_0, t + \frac{1}{\mu}\log\frac{1}{\alpha})$ at $\alpha = 0$ by $\lim_{\alpha \to 0} \phi(\alpha \vv_0, t + \frac{1}{\mu}\log\frac{1}{\alpha})$. Then $\phi$ is continuous for $(t, \alpha) \in D = [0, t_0] \times [0, 1]$. Thus, there exists $\delta > 0$ such that $\phi(D) \subset B_{\delta}(\mathbf{0})$. Because $g(\vtheta)$ is continuously differentiable, it is Lipschitz continuous in $B_{\delta}(\mathbf{0})$.

In Equation~\ref{eq:derivative}, let $\alpha \to 0$. Since $\phi(\alpha \vv_0, s + \frac{1}{\mu}\log\frac{1}{\alpha})$ uniformly converges to $h(\vv_0, s)$ and $g$ is Lipschitz continuous, we obtain
\[
h(\vv_0, t) = h(\vv_0, 0) + \int_{0}^{t} g(h(\vv_0, s)) ds.
\]
Therefore, $h(\vv_0, t)$ is differentiable over $t$, and $\frac{d}{dt}h(\vv_0, t) = g(h(\vv_0, t))$.
\end{proof}

\subsection{Proof of Theorem\ref{theorem2}}
\label{Proof of Theorem2}

Next we begin by establishing Theorem~\ref{theorem4}, and then leverage it to validate Theorem~\ref{theorem2}.

Consider the gradient flow of the loss function $\ell$:
\begin{equation}
\begin{aligned}
    &\frac{d\vtheta}{dt} = -\nabla \ell(\vtheta(t)), \\
    &\vtheta(0) = \vtheta_{0},
\end{aligned}
\label{eq:gd}
\end{equation}
and denote the solution of \ref{eq:gd} as $\phi(\vtheta_{0},t)$.

\begin{theorem}
Assume the following conditions:
\begin{itemize}
    \item[(i)] $\ell(\vtheta)$ is an analytic and nonnegative function.
    \item[(ii)] $\mathbf{0}$ is a strict saddle point of $\ell(\vtheta)$.
\end{itemize}
Denote $\vV_1$ to be the eigenspace corresponding to the largest eigenvalue of $-\nabla^2 \ell (\vtheta)|_{\vtheta=\mathbf{0}}$. Let $\vv_1$ be a vector in $\vV_1$. The vector $\vu_{\alpha}$ is arbitrary, subject to the conditions:
\begin{itemize}
    \item[(i)] $\vu_{\alpha}$ is orthogonal to $\vV_1$.
    \item[(ii)] There exists $c > 0$ such that for all $\alpha > 0$, $\|\vu_{\alpha}\|_{2} \leq c\alpha$.
\end{itemize}
Denote the solution of \ref{eq:gd} as $\phi(\vtheta_{0},t)$. Then $h(\vv,t) = \lim_{\alpha \to 0} \phi(\alpha \vv + \vu_{\alpha}, t + \frac{1}{\mu_{1}}\log\frac{1}{\alpha})$ exists.
Given $\vv_0$, if $h(\vv_0,t)$ is bounded for all $t \geq 0$, then the limit $\lim_{t \to \infty} h(\vv_0,t)$ exists. Furthermore, if this limit is not a saddle point of $\ell(\vtheta)$, then there exists a neighborhood $B_{\delta}(\vv_0)$ of $\vv_0$, and an $\alpha_0 > 0$, such that the limit $\lim_{t \to \infty} \phi(\alpha \vv + \vu_{\alpha}, t + \frac{1}{\mu_{1}}\log\frac{1}{\alpha})$ exists for all $\vv \in B_{\delta}(\vv_0)$ and $0 < \alpha < \alpha_0$, and
\[
\lim_{t \to \infty} h(\vv_0,t) = \lim_{\alpha \to 0} \lim_{t \to \infty} \phi(\alpha \vv_0 + \vu_{\alpha}, t + \frac{1}{\mu_{1}}\log\frac{1}{\alpha}).
\]
Moreover, the limit $\lim_{t \to \infty} h(\vv,t)$ is a continuous function of $\vv$ at $\vv_0$.
\label{theorem4}
\end{theorem}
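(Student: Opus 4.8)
The plan is to prove Theorem~\ref{theorem4} in three stages that mirror the three claims in its statement: (a) existence of $h(\vv,t)$; (b) existence of the $t\to\infty$ limit along $h(\vv_0,\cdot)$ when its trajectory is bounded; (c) the interchange-of-limits identity and continuity of $\vv\mapsto\lim_{t\to\infty}h(\vv,t)$ near a non-saddle limit point.

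\textbf{Stage (a): existence of $h$.} First I would check that the hypotheses of Theorem~\ref{theorem3} are met by $g(\vtheta):=-\nabla\ell(\vtheta)$. Analyticity of $\ell$ gives $g\in\mathcal C^2$; $\mathbf 0$ being a strict saddle of $\ell$ gives $g(\mathbf 0)=\mathbf 0$, that $\nabla g(\mathbf 0)=-\nabla^2\ell(\mathbf 0)$ is symmetric hence diagonalizable, and that its largest eigenvalue $\mu_1$ is strictly positive (strict saddle means $\nabla^2\ell(\mathbf 0)$ has a negative eigenvalue). Then Theorem~\ref{theorem3} directly yields that $h(\vv,t)=\lim_{\alpha\to0}\phi(\alpha\vv+\vu_\alpha,\,t+\tfrac1{\mu_1}\log\tfrac1\alpha)$ exists, is continuous in $(\vv,t)$, and (by Corollary~\ref{Cor1}) depends on $\vv$ only through its projection $\vv_1$ onto $\vV_1$; Proposition~\ref{proposition1} tells us $h(\vv_0,\cdot)$ is itself a gradient-flow trajectory, i.e. $\tfrac{d}{dt}h(\vv_0,t)=-\nabla\ell(h(\vv_0,t))$.

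\textbf{Stage (b): convergence as $t\to\infty$.} Here is where analyticity is essential. Since $h(\vv_0,\cdot)$ is a bounded gradient-flow trajectory of the analytic function $\ell$, I would invoke the \L{}ojasiewicz gradient inequality~\citep{lojasiewicz1965ensembles}: near any limit point of the bounded orbit, $|\ell(\vtheta)-\ell^*|^{1-\vartheta}\le C\|\nabla\ell(\vtheta)\|_2$ for some $\vartheta\in(0,\tfrac12]$. The standard consequence is that the trajectory has finite length, $\int_0^\infty\|\nabla\ell(h(\vv_0,t))\|_2\,dt<\infty$, so $\lim_{t\to\infty}h(\vv_0,t)=:\vtheta^\infty$ exists. (One writes $\tfrac{d}{dt}(\ell(h)-\ell^*)^\vartheta=-\vartheta(\ell(h)-\ell^*)^{\vartheta-1}\|\nabla\ell(h)\|_2^2\le-c\|\nabla\ell(h)\|_2$ and integrates; boundedness of the orbit is used to stay in a single \L{}ojasiewicz neighborhood after a cover-and-compactness argument.) Note $\vtheta^\infty$ is automatically a critical point of $\ell$.

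\textbf{Stage (c): interchanging the limits and continuity.} Suppose $\vtheta^\infty=\lim_{t\to\infty}h(\vv_0,t)$ is not a saddle point of $\ell$; being a nonnegative-$\ell$ critical point that is not a saddle, it is a local minimum, and by the \L{}ojasiewicz inequality local minima of analytic functions are (locally exponentially, or at worst \L{}ojasiewicz-rate) attracting — concretely, there is a neighborhood $U\ni\vtheta^\infty$ such that every gradient-flow trajectory entering $U$ stays in $U$ and converges to a point of $\arg\min_U\ell$, with a uniform modulus of convergence. Fix $T$ with $h(\vv_0,T)$ well inside $U$. By continuity of $h$ in $(\vv,t)$ (Stage (a)) there is $\delta>0$ with $h(\vv,T)\in U$ for all $\vv\in B_\delta(\vv_0)$, so $\lim_{t\to\infty}h(\vv,t)$ exists for all such $\vv$; and by uniform convergence of $\phi(\alpha\vv+\vu_\alpha,\cdot+\tfrac1{\mu_1}\log\tfrac1\alpha)\to h(\vv,\cdot)$ there is $\alpha_0>0$ so that for $0<\alpha<\alpha_0$ the finite-$\alpha$ trajectory also enters $U$ at a comparable time, hence converges, giving $\lim_{\alpha\to0}\lim_{t\to\infty}\phi(\alpha\vv_0+\vu_\alpha,t+\tfrac1{\mu_1}\log\tfrac1\alpha)=\lim_{t\to\infty}h(\vv_0,t)$. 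Finally, continuity of $\vv\mapsto\lim_{t\to\infty}h(\vv,t)$ at $\vv_0$ follows by combining continuity of $h(\cdot,T)$ with the uniform (over $U$) modulus of convergence to the minimum set — one needs the local minimum value set to behave well, which again is handled by the \L{}ojasiewicz estimate.

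\textbf{Main obstacle.} I expect the delicate point to be Stage (c): passing from "the trajectory $h(\vv_0,\cdot)$ converges" to the \emph{uniform} statement that all nearby trajectories — both the nearby $h(\vv,\cdot)$ and the finite-$\alpha$ flows $\phi(\alpha\vv+\vu_\alpha,\cdot)$ — converge, and converge to points depending continuously on the data. The issue is that the limit set $\arg\min_U\ell$ may be a positive-dimensional manifold of minima, so one cannot argue by simple local asymptotic stability of an isolated equilibrium; instead one must quantify the \L{}ojasiewicz convergence rate uniformly on a neighborhood and feed in the uniform convergence $\phi\to h$ from Theorem~\ref{theorem3} with enough care that the two limiting processes ($\alpha\to0$ and $t\to\infty$) can be swapped. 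Stages (a) and (b) are, by contrast, essentially citations of Theorem~\ref{theorem3} and the standard \L{}ojasiewicz finite-length argument.
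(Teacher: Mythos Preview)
Your proposal is correct and follows essentially the same three-stage architecture as the paper: verify Theorem~\ref{theorem3} applies to $g=-\nabla\ell$, invoke Proposition~\ref{proposition1} plus the \L{}ojasiewicz finite-length argument for Stage~(b), and handle Stage~(c) by combining continuity of the time-$T$ map with a uniform \L{}ojasiewicz-based convergence estimate near the local minimum. The only organizational difference is that the paper packages your Stage~(c) argument into a standalone Lemma~\ref{Lemma1} (continuity of $\vv\mapsto\lim_{t\to\infty}\phi(\vv,t)$ near a local-minimum limit) and then reduces to it via the substitution $q(\vv,\alpha):=\phi(\alpha\vv+\vu_\alpha,\tfrac1{\mu_1}\log\tfrac1\alpha)$, noting $\phi(\alpha\vv+\vu_\alpha,t+\tfrac1{\mu_1}\log\tfrac1\alpha)=\phi(q(\vv,\alpha),t)$; this sidesteps your ``uniform convergence of the whole trajectory'' framing by only needing continuity of $q$ at $(\vv_0,0)$, but the content is the same.
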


\begin{proof}
We attempt to apply Theorem~\ref{theorem3} by setting $g(\vtheta) = -\nabla \ell(\vtheta)$ and checking the conditions for $g(\vtheta)$:
\begin{itemize}
    \item Since $\ell(\vtheta)$ is analytic, $g(\vtheta)$ is twice differentiable.
    \item As $\mathbf{0}$ is a strict saddle point of $\ell(\vtheta)$, we have $g(\mathbf{0}) = \mathbf{0}$.
    \item The matrix $\nabla g(\vtheta)|_{\vtheta=\mathbf{0}} = -\nabla^2 \ell (\vtheta)|_{\vtheta=\mathbf{0}}$ is symmetric and thus diagonalizable.
    \item Because $\mathbf{0}$ is a strict saddle point, the largest eigenvalue of $\nabla g(\vtheta)|_{\vtheta=\mathbf{0}}$ is positive.
\end{itemize}
Therefore, $g(\vtheta)$ satisfies the conditions of Theorem~\ref{theorem3}. Applying Theorem~\ref{theorem3}, we conclude that $h(\vv,t) = \lim_{\alpha \rightarrow 0} \phi(\alpha \vv + \vu_{\alpha}, t + \frac{1}{\mu_{1}}\log\frac{1}{\alpha})$ exists.

We then proceed in two steps to prove Theorem~\ref{theorem4}.

(i) First, we prove the existence of $\lim_{t \to \infty} h(\vv_0,t)$ under the condition that $h(\vv_0,t)$ is bounded for $t \geq 0$.

Let $\vtheta(t) = h(\vv_0,t)$. By Proposition \ref{proposition1}, $\vtheta'(t)=-\nabla \ell(\vtheta(t))$. Because $\vtheta(t)$ is bounded for all $t \geq 0$, there exists a sequence $\{t_n\}$ such that $\lim_{n \to \infty} t_n = +\infty$, and $\hat{\vtheta} = \lim_{n \to \infty} \vtheta(t_n)$ exists. Since
\[
\frac{d}{dt} \ell(\vtheta(t)) = \langle -\nabla \ell(\vtheta(t)), \vtheta'(t) \rangle = -\|\nabla \ell(\vtheta(t))\|_2^2,
\]
$\ell(\vtheta(t))$ is monotonically decreasing over $t$, and $\ell(\vtheta(t)) \geq \ell(\hat{\vtheta})$ for all $t \geq 0$. Furthermore, $\int_0^{+\infty} \|\nabla \ell(\vtheta(t))\|_2^2 \, dt \leq \ell(\vtheta(0))$. Therefore, $\lim_{t \to \infty} \|\nabla \ell(\vtheta(t))\|_2 = 0$, and $\nabla \ell(\hat{\vtheta}) = \lim_{t \to \infty} \nabla \ell(\vtheta(t_n)) = \mathbf{0}$.

By Łojasiewicz's inequality\cite{lojasiewicz1965ensembles}, there exists $C > 0$ and $0 < \mu < 1$, and a neighborhood $B_{\delta}(\hat{\vtheta})$ such that
\[
\|\nabla \ell(\vtheta)\|_2 \geq C |\ell(\vtheta) - \ell(\hat{\vtheta})|^\mu, \quad \forall \vtheta \in B_{\delta}(\hat{\vtheta}).
\]
Since $\ell(\vtheta(t)) \geq \ell(\hat{\vtheta})$, it follows that
\[
\|\nabla \ell(\vtheta(t))\|_2 \geq C (\ell(\vtheta(t)) - \ell(\hat{\vtheta}))^\mu, \quad \forall \vtheta \in B_{\delta}(\hat{\vtheta}).
\]
Given that $\lim_{t \to \infty} \vtheta(t_n) = \hat{\vtheta}$, there exists an $n$ such that $\|\vtheta(t_n) - \hat{\vtheta}\|_2 < \frac{\delta}{2}$ and $|\ell(\vtheta(t_n)) - \ell(\hat{\vtheta})| < \frac{1}{2} C (1 - \mu) \delta^{\frac{1}{1 - \mu}}$. Because
\[
\frac{d}{dt} \ell(\vtheta(t)) = -\|\nabla \ell(\vtheta(t))\|_2^2 = -\|\nabla \ell(\vtheta(t))\|_2 \times \left\|\frac{d\vtheta}{dt}\right\|_2 \leq -C (\ell(\vtheta(t)) - \ell(\hat{\vtheta}))^\mu \left\|\frac{d\vtheta}{dt}\right\|_2,
\]
we have
\[
\frac{d}{dt} (\ell(\vtheta(t)) - \ell(\hat{\vtheta}))^{1 - \mu} \leq -(1 - \mu) C \left\|\frac{d\vtheta}{dt}\right\|_2.
\]
Thus,
\[
\int_{t_n}^{t} \left\|\frac{d\vtheta}{dt}\right\|_2 \, dt \leq \frac{1}{C(1 - \mu)} (\ell(\vtheta(t_n)) - \ell(\hat{\vtheta}))^{1 - \mu}.
\]
Since $\|\vtheta(t) - \vtheta(t_n)\|_2 \leq \int_{t_n}^{t} \left\|\frac{d\vtheta}{dt}\right\|_2 \, dt$, it follows that
\[
\|\vtheta(t) - \vtheta(t_n)\|_2 \leq \frac{1}{C(1 - \mu)} (\ell(\vtheta(t_n)) - \ell(\hat{\vtheta}))^{1 - \mu} < \frac{\delta}{2}.
\]
Therefore,
\[
\|\vtheta(t) - \hat{\vtheta}\|_2 \leq \|\vtheta(t) - \vtheta(t_n)\|_2 + \|\vtheta(t_n) - \hat{\vtheta}\|_2 < \delta,
\]
so for all $t \geq t_n$, $\vtheta(t) \in B_{\delta}(\hat{\vtheta})$. Thus, we can apply Łojasiewicz's inequality for all $t \geq t_n$. Consequently,
\[
\int_{t_n}^{t} \left\|\frac{d\vtheta}{dt}\right\|_2 \, dt \leq \frac{1}{C(1 - \mu)} (\ell(\vtheta(t_n)) - \ell(\hat{\vtheta}))^{1 - \mu}, \quad \forall t \geq t_n.
\]
Thus, the length of the trajectory of $\vtheta(t)$ is finite, which implies that $\lim_{t \to \infty} \vtheta(t)$ exists.

\begin{lemma}
    Denote the solution of Equation~\ref{eq:gd} as $\phi(\vtheta_{0}, t)$. Assume the following:
    \begin{enumerate}[(i)]
        \item $\ell(\vtheta)$ is an analytic and nonnegative function.
        \item $\overline{\vtheta} = \lim_{t \to \infty} \phi(\vv_0, t)$ exists.
        \item $\overline{\vtheta}$ is a local minimum of $\ell(\vtheta)$.
    \end{enumerate}
    Then there exists a neighborhood of $\vv_0$, denoted $B_{\delta}(\vv_0)$, such that for all $\vv \in B_{\delta}(\vv_0)$, the limit $\lim_{t \to \infty} \phi(\vv, t)$ exists and is continuous at $\vv_0$.
\label{Lemma1}
\end{lemma}

\begin{proof}
By Łojasiewicz's inequality, there exists $C > 0$ and $0 < \mu < 1$, and a neighborhood $B_{\epsilon_0}(\overline{\vtheta})$ such that
\[
\|\nabla \ell(\vtheta)\|_2 \geq C |\ell(\vtheta) - \ell(\overline{\vtheta})|^\mu, \quad \forall \vtheta \in B_{\epsilon_0}(\overline{\vtheta}).
\]

Since $\overline{\vtheta} = \lim_{t \to \infty} \phi(\vv_0, t)$, for all $\epsilon \in (0, \epsilon_0)$, there exists $t_0 > 0$ such that $\|\phi(\vv_0, t_0) - \overline{\vtheta}\|_2 < \frac{\epsilon}{4}$ and $|\ell(\phi(\vv_0, t_0)) - \ell(\overline{\vtheta})| < \frac{1}{4} C (1 - \mu) \epsilon^{\frac{1}{1 - \mu}}$.

Because $\phi(\vtheta, t_0)$ is locally Lipschitz continuous over $\vtheta$, there exists $L > 0$ and $\delta > 0$ such that $\|\phi(\vv, t_0) - \phi(\vv_0, t_0)\|_2 < \frac{\epsilon}{4}$ for all $\vv \in B_{\delta}(\vv_0)$ and $|\ell(\phi(\vv, t_0)) - \ell(\phi(\vv_0, t_0))| < \frac{1}{4} C (1 - \mu) \epsilon^{\frac{1}{1 - \mu}}$ for all $\vv \in B_{\delta}(\vv_0)$.

Thus, for all $\vv \in B_{\delta}(\vv_0)$, we have $\|\phi(\vv, t_0) - \overline{\vtheta}\|_2 < \|\phi(\vv, t_0) - \phi(\vv_0, t_0)\|_2 + \|\phi(\vv_0, t_0) - \overline{\vtheta}\|_2 < \frac{\epsilon}{2}$ and $|\ell(\phi(\vv, t_0)) - \ell(\overline{\vtheta})| \leq |\ell(\phi(\vv, t_0)) - \ell(\phi(\vv_0, t_0))| + |\ell(\phi(\vv_0, t_0)) - \ell(\overline{\vtheta})| < \frac{1}{2} C (1 - \mu) \epsilon^{\frac{1}{1 - \mu}}$ for all $\vv \in B_{\delta}(\vv_0)$.

Applying Łojasiewicz's inequality as in Theorem~\ref{theorem3}, we get
\begin{equation}
\int_{t_0}^{t} \left\|\frac{\partial \phi(\vv, \tau)}{\partial \tau}\right\|_2 \, d\tau \leq \frac{1}{C(1 - \mu)} (\ell(\phi(\vv, t_0)) - \ell(\overline{\vtheta}))^{1 - \mu}.
\label{eq:lemma1}
\end{equation}
and
\begin{equation}
\|\phi(\vv, t) - \phi(\vv, t_0)\|_2 \leq \frac{1}{C(1-\mu)} (\ell(\phi(\vv, t_0)) - \ell(\overline{\vtheta}))^{1-\mu} < \frac{\epsilon}{2}.
\label{eq:lemma2}
\end{equation}

Therefore, $\|\phi(\vv, t) - \overline{\vtheta}\|_2 \leq \|\phi(\vv, t) - \phi(\vv, t_0)\|_2 + \|\phi(\vv, t_0) - \phi(\vv_0, t_0)\|_2 + \|\phi(\vv_0, t_0) - \overline{\vtheta}\|_2 < \frac{\epsilon}{2} + \frac{\epsilon}{4} + \frac{\epsilon}{4} = \epsilon$.

From Equation~\ref{eq:lemma1}, we know that for all $\vv \in B_{\delta}(\vv_0)$, the length of the trajectory of $\phi(\vv, t)$ is finite, so $\lim_{t \to \infty} \phi(\vv, t)$ exists.

From Equation~\ref{eq:lemma2}, we know that for all $\epsilon \in (0, \epsilon_0)$, there exists $\delta > 0$ and $t_0 > 0$ such that for all $\vv \in B_{\delta}(\vv_0)$ and for all $t > t_0$, we have $\|\phi(\vv, t) - \overline{\vtheta}\|_2 < \epsilon$. Letting $t \to \infty$, we conclude: for all $\epsilon \in (0, \epsilon_0)$, there exists $\delta > 0$ such that for all $\vv \in B_{\delta}(\vv_0)$, $\|\lim_{t \to \infty} \phi(\vv, t) - \overline{\vtheta}\|_2 < \epsilon$. Therefore, $\lim_{t \to \infty} \phi(\vv, t)$ is continuous at $\vv_0$.
\end{proof}

Now we are back to proof of second part of Theorem~\ref{theorem4}. We assume that $\lim_{t\to \infty}h(\vv_0,t)$ exists and is a local minimum of $\ell(\vtheta)$. We want to prove the following:
\begin{enumerate}
    \item There exists a neighborhood $B_{\delta}(\vv_0)$ of $\vv_0$, and an $\alpha_0 > 0$, such that the limit $\lim_{t \to \infty} \phi(\alpha \vv + \vu_{\alpha}, t + \frac{1}{\mu_{1}}\log\frac{1}{\alpha})$ exists for all $\vv \in B_{\delta}(\vv_0)$ and $0 < \alpha < \alpha_0$.
    \item $\lim_{t \to \infty} h(\vv_0,t) = \lim_{\alpha \to 0} \lim_{t \to \infty} \phi(\alpha \vv_0 + \vu_{\alpha}, t + \frac{1}{\mu_{1}}\log\frac{1}{\alpha}).$
    \item The limit $\lim_{t \to \infty} h(\vv,t)$ is a continuous function of $\vv$ at $\vv_0$.
\end{enumerate}

Denote $q(\vv,\alpha) := \phi(\alpha \vv + \vu_{\alpha}, \frac{1}{\mu_{1}}\log\frac{1}{\alpha})$. By Theorem~\ref{theorem3}, $\lim_{\alpha \to 0}q(\vv,\alpha)$ exists and is continuous over $\vv$.

We have 
\[
\phi(\alpha \vv + \vu_{\alpha}, t + \frac{1}{\mu_{1}}\log\frac{1}{\alpha}) = \phi(\phi(\alpha \vv + \vu_{\alpha}, \frac{1}{\mu_{1}}\log\frac{1}{\alpha}), t) = \phi(q(\vv,\alpha), t).
\]
And 
\[
h(\vv,t) = \lim_{\alpha \to 0}\phi(\alpha \vv + \vu_{\alpha}, t + \frac{1}{\mu_{1}}\log\frac{1}{\alpha}) = \phi(\lim_{\alpha \to 0}q(\vv,\alpha), t).
\]

Denote $q(\vv,0) := \lim_{\alpha \to 0}q(\vv,\alpha)$. By setting $\vv_0$ in Lemma~\ref{Lemma1} equal to $q(\vv_0,0)$, we get: there exists a neighborhood of $q(\vv_0,0)$, denoted $B_{\delta}(q(\vv_0,0))$, such that for all $\vv \in B_{\delta}(q(\vv_0,0))$, the limit $\lim_{t \to \infty} \phi(\vv, t)$ exists and is continuous at $q(\vv_0,0)$.

Because $q(\vv,\alpha)$ is continuous at $(\vv_0,0)$, the pre-image $q^{-1}(B_{\delta}(q(\vv_0,0)))$ is open. Thus, there exists a neighborhood $B_{\delta}(\vv_0)$ of $\vv_0$, and an $\alpha_0 > 0$, such that for all $\vv \in B_{\delta}(\vv_0)$ and $0 < \alpha < \alpha_0$, the limit $\lim_{t \to \infty} \phi(\alpha \vv + \vu_{\alpha}, t + \frac{1}{\mu_{1}}\log\frac{1}{\alpha})$ exists.

Since $q(\vv,\alpha)$ is continuous at $(\vv_0,0)$, and $\lim_{t\to \infty}\phi(\vv,t)$ is continuous at $q(\vv_0,0)$, it follows that $\lim_{t\to \infty}\phi(q(\vv,\alpha),t)$ is continuous at $(\vv_0,0)$. Therefore, we have
\[
\lim_{\alpha\to 0}\lim_{t\to \infty}\phi(q(\vv_0,\alpha),t) = \lim_{t\to \infty}\phi(q(\vv_0,0),t) = \lim_{t\to \infty}h(\vv_0,t),
\]
and $\lim_{t\to \infty}h(\vv,t)$ is continuous at $\vv_0$. 
\end{proof}

\begin{proof}[\textbf{Proof of Theorem~\ref{theorem2}}]
Under the notations and assumptions of Theorem~\ref{theorem2}, we have
\[
-\nabla^2 \ell(\mathbf{0})=
\begin{pmatrix}
D_1 & 0 & \cdots & 0 \\
0 & D_2 & \cdots & 0 \\
\vdots & \vdots & \ddots & \vdots \\
0 & 0 & \cdots & D_m
\end{pmatrix},
\]
where each $D_i$ is defined as
\[
D_i=
\begin{pmatrix}
0 & \vgamma^\top \\
\vgamma & \mathbf{0}_{d \times d}
\end{pmatrix},
\]
for $i=1,2,\ldots,m$.

The maximum eigenvalue of $D_i$ is $\|\vgamma\|_2$. Therefore, the maximum eigenvalue of $-\nabla^2 \ell(\mathbf{0})$ is $\|\vgamma\|_2 > 0$, indicating that $\mathbf{0}$ is a strict saddle point of $\ell(\vtheta)$. Given that $\ell(\vtheta)$ is an analytic and nonnegative function, and $\mathbf{0}$ is a strict saddle point of $\ell(\vtheta)$, the conditions of Theorem~\ref{theorem4} are satisfied. Let us denote the eigenspace corresponding to $\|\vgamma\|_2$ by $\vV$. Then the projection of $\vtheta$ onto $\vV$ is determined by $\vC$. We now proceed to verify the assertions of Theorem~\ref{theorem2} point by point.

\begin{enumerate}[(i)]
    \item By applying Theorem~\ref{theorem4}, we confirm that the limit 
    \[
    h(\vtheta,t)=\lim_{\alpha \to 0}\phi(\alpha \vtheta, t+\frac{1}{\|\vgamma\|_2}\log\frac{1}{\alpha})
    \]
    exists.
    
    \item When the set $\{h(\vtheta,t) : t \geq 0\}$ is bounded, Theorem~\ref{theorem4} ensures that the limit $\lim_{t \to \infty}h(\vtheta,t)$ exists. Let $\hat{\vtheta}$ be the projection of $\vtheta$ onto $\vV$. Since $h(\vtheta,t)=h(\hat{\vtheta},t)$ and for any $\alpha > 0$, $h(\alpha \vtheta,t)=h(\vtheta,t+\frac{1}{\|\vgamma\|_2}\log \alpha)$, it follows that $h(\vtheta,t)=h\left(\frac{\hat{\vtheta}}{\|\hat{\vtheta}\|_2},t+\frac{1}{\|\vgamma\|_2}\log \frac{1}{\|\hat{\vtheta}\|_2}\right)$. Given the existence of $\lim_{t \to \infty}h(\vtheta,t)$, we have
    \[
    \lim_{t\to \infty}h(\vtheta,t)=\lim_{t\to \infty} h\left(\frac{\hat{\vtheta}}{\|\hat{\vtheta}\|_2},t\right).
    \]
    As $\frac{\hat{\vtheta}}{\|\hat{\vtheta}\|_2}$ is determined by $\frac{\vC}{\|\vC\|_2}$, the limit $\lim_{t\to \infty}h(\vtheta,t)$ is  determined by $\frac{\vC}{\|\vC\|_2}$.

    \item Applying Theorem~\ref{theorem4} again, we deduce that if $\lim_{t \to \infty} h(\vtheta_0,t)$ exists and is not a saddle point of $\ell(\vtheta)$, then
    \[
    \lim_{t \to \infty} h(\vtheta_0,t) = \lim_{\alpha \to 0} \lim_{t \to \infty} \phi\left(\alpha \vtheta_0, t + \frac{1}{\|\vgamma\|_2}\log\frac{1}{\alpha}\right).
    \]
    Furthermore, the limit $\lim_{t \to \infty} h(\vtheta,t)$ is continuous at $\vtheta_0$.
\end{enumerate}
\end{proof}

\section{Experiment of higher dimensional input}
\label{Appen:higher dimensional input}
Consider a neuron network with high dimensional input$f_{\vtheta}(\vx)=\sum_{i=1}^m a_i\sigma(\vw_i^{\top} \vx)$. Theorem~\ref{theorem1} shows that under small initialzation, the trajectory is determined by $\frac{\vC(\vtheta)}{\|\vC(\vtheta)\|_2}$. Because $\vC(\vtheta)$ is an $m$ dimensional vector, so the trajectory of $\vtheta$ under all initialization is restricted to a $m$ dimensional manifold. Note the dimension of manifold of $\vtheta$ is independent of dimension of input. So even when input is very high dimensional, the trace of parameters stays in a low dimensional.

In Figure~\ref{high dimensional loss and parameters}, we train a neural network for $m=2$ and $d=500$. The sample size is $500$. The initialization of parameters is a Gaussian distribution centered at $\mathbf{0}$, with its standard in figure's legend. By scaling the second neuron, we manage to make $C_1/C_2$ to be a constant in each subfigure. In this figure, the loss curve and parameters curve matches pefectly among different trials. The results show that, there existing a limiting trace of parameters as initialization scale approaches $0$. Moreover, the trace of parameters is  determined by $C_1/C_2$.

In Figure~\ref{high dimensional Q^*}, we conduct experiments for $m=2$ and $d=3$. $f_{\vtheta}(\vx)=a_1\tanh(\vw_1^{\top}\vx+b_1)+a_2\tanh(\vw_2^{\top}\vx+b_2)$, with $x \in \sR^3$. The target function is $f^*(\vx)=\tanh(\mathbf{1}^T\vx+1)$. The training data $(\vx_i,y_i)_{i=1}^n$ is obtained by $y_i=f^*(\vx_i)$ and $\{\vx_i\}_{i=1}^n$ draw independently from uniform distribution on interval $[-2,2]^d$. The sample size $n=10$. It is seen that in whatever initialization, the parameters all converge to $Q^*$, thus the generalization error is zero. Besides recovery, there is a clear sign that network initialized with $\Tilde{c}$ closer to $1$ will converge to $Q^1$ and network initialized with $\Tilde{c}$ closer to $0$ will converge to $Q^2$. This phenomenon is in accordance to what we observed in one-dimensional experiment.

In Figure~\ref{high dimensional, c and genloss}, we plot generalization error in high dimensional case. The network model is $f_{\vtheta}(\vx)=a_1\tanh(\vw_1^{\top}\vx+b_1)+a_2\tanh(\vw_2^{\top}\vx+b_2)$, with $x \in \sR^3$. The target function is $f^*(\vx)=\tanh(\mathbf{1}^T\vx+1)$. The setting of experiment is $m=2$ and $d=3$. In this case, optimistic sample size is $5$, the separation sample size of $Q^2$ is $6$, the separation sample size of $Q^1$ is $9$. It is seen that when the sample size is larger than optimistic sample size, the network can recover at $\Tilde{c}=1$ or $\Tilde{c}=0$. When sample size is above separation sample size of $Q^1$, all initialization can recover target function. At separation sample size of $Q^2$, which is $n=6$, there is a small probability of initialization cannot recover target function. When sample size is above $6,7,8$, the probability of recovery increases. 

In Figure~\ref{high dimension, converge of different n}, we show that why when $n=6,7,8$, recovery fails to happen at $\Tilde{c}=0$. Note that when sample size reaches the separation size of $Q^2$, it only guarantees that separation of $Q^2$ is almost every. That is to say, it allows that a subset with zero-measure with respect to $Q^2$ is not separated. Unfortunately, the origin of $Q^2$ belongs to the set. There exists other global minimum rather than $Q^2$ around origin of $Q^2$. Besides, $\Tilde{c}=0$ will lead to converge to origin of $Q^2$. So it is possible for $\Tilde{c}$ near $0$ to converge to these imperfect global minimum, thus failing in recovery. Overall, our understanding is that, both separation sample size and optimistic sample size is a lower bound for recovery. The recovery need not to happen at these sample size, but under these sample size, the recovery could not happen.

\begin{figure}[htbp]
\centering
\subfigure[$c=0.2$]{
\includegraphics[width=0.37\linewidth]{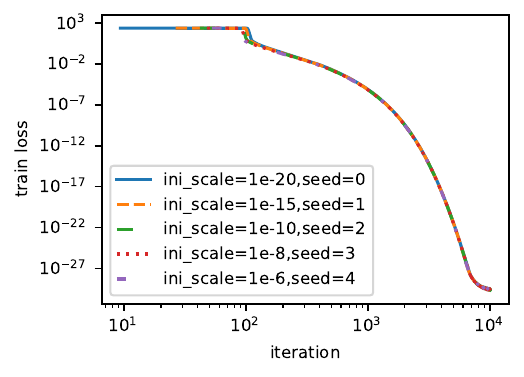}
\label{fig:c02_loss}}
\subfigure[$c=0.2$]{
\includegraphics[width=0.58\linewidth]{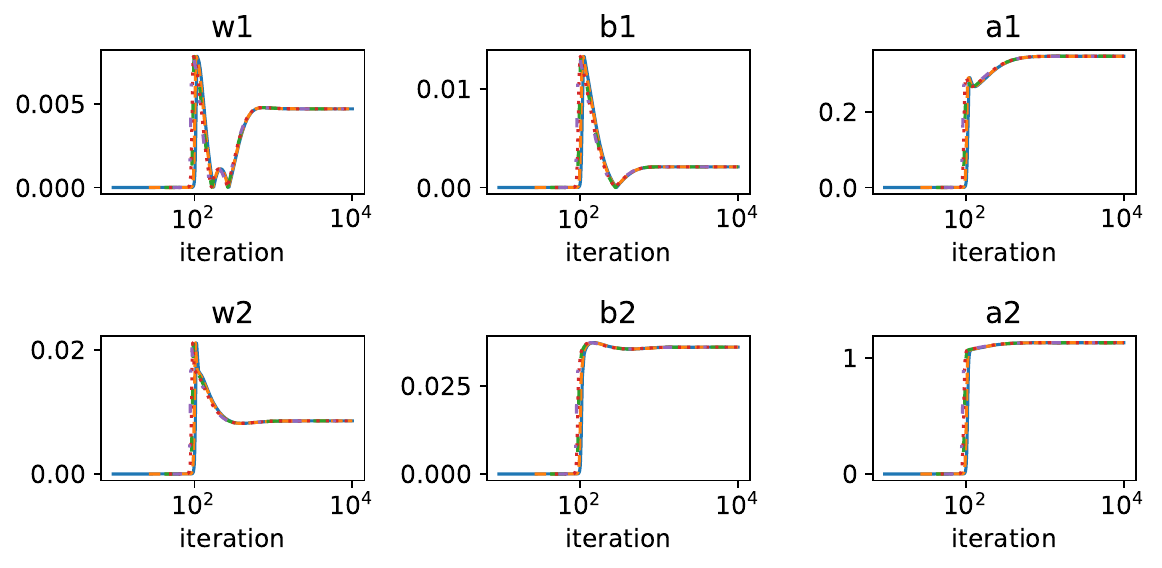}
\label{fig:c02_trajectory}}
\subfigure[$c=0.8$]{
\includegraphics[width=0.37\linewidth]{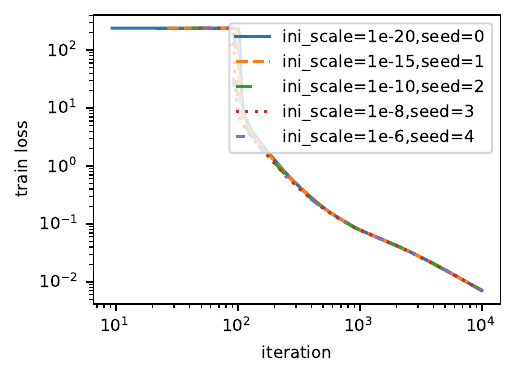}
\label{fig:c08_loss}}
\subfigure[$c=0.8$]{
\includegraphics[width=0.58\linewidth]{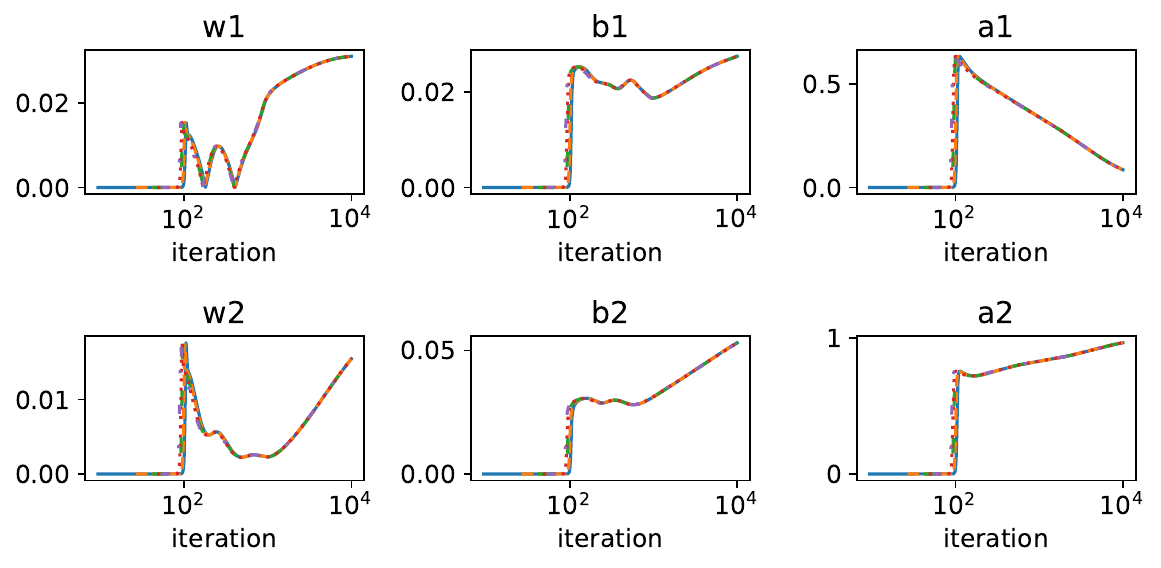}
\label{fig:c08_trajectory}}
\subfigure[$c=2$]{
\includegraphics[width=0.37\linewidth]{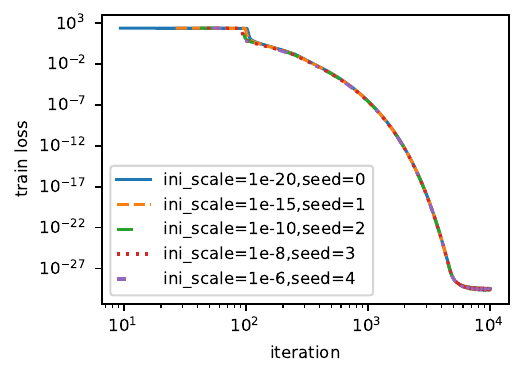}
\label{fig:c2_loss}}
\subfigure[$c=2$]{
\includegraphics[width=0.58\linewidth]{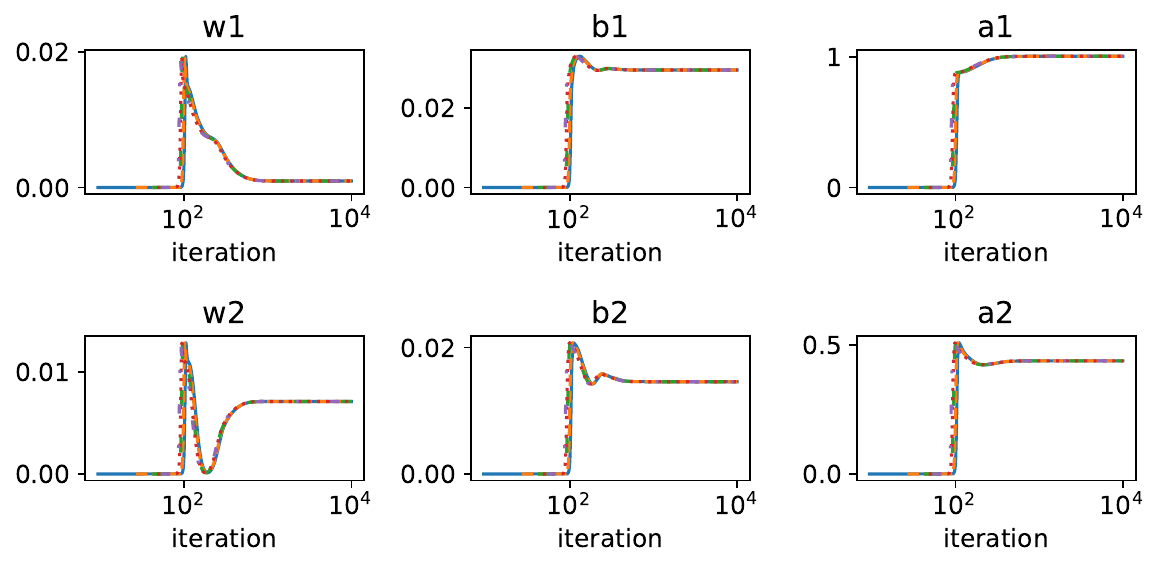}
\label{fig:c2_trajectory}}
\caption{Training a two-layer neural network with varying widths. The network model is $f_{\vtheta}(\vx)=a_1\tanh(\vw_1^{\top}\vx+b_1)+a_2\tanh(\vw_2^{\top}\vx+b_2)$, with $x \in \sR^3$. The target function is $f^*(\vx)=\tanh(\mathbf{1}^T\vx+1)$. The training data $(\vx_i,y_i)_{i=1}^n$ is obtained by $y_i=f^*(\vx_i)$ and $\{\vx_i\}_{i=1}^n$ draw independently from uniform distribution on interval $[-2,2]^d$ with random seed $0$. In this experiments, $m=2$,$d=500$,$n=500$. The iterations is $10^4$, the learning rate is $0.001$. Different initialization use different seeds to generate random number for Gaussian distribution of parameters. But we keep $c:=C_1/C_2$ to be a constant for all initialization in a subfigure. The value of $c$ is in the caption of subfigure. In the right part, the parameters $w_1$ and $w_2$ is the first coordinate of $\vw_1$ and $\vw_2$.}
\label{high dimensional loss and parameters}
\end{figure}

\begin{figure}[htbp]
\centering
\subfigure[$w^1$]{
\includegraphics[width=0.315\linewidth]{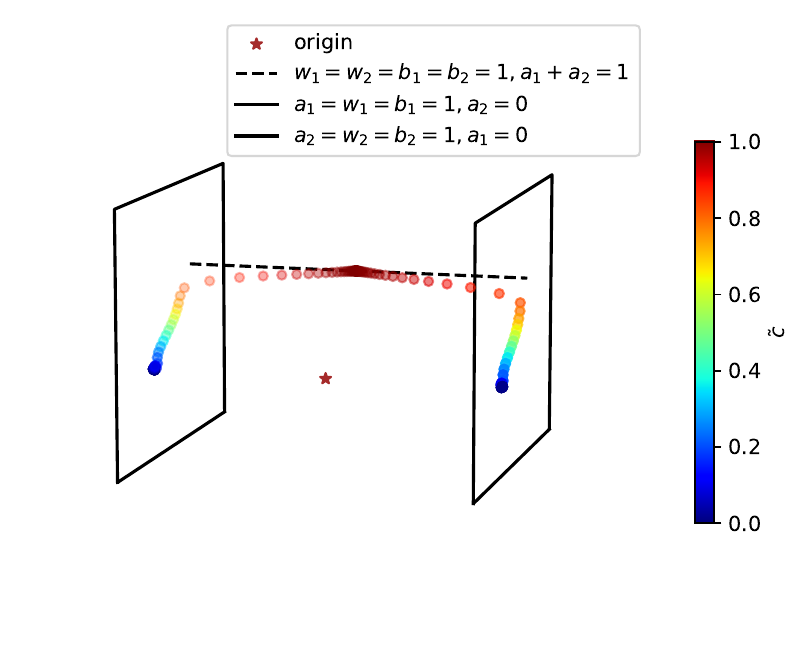}
\label{fig:w1}}
\subfigure[$w^2$]{
\includegraphics[width=0.315\linewidth]{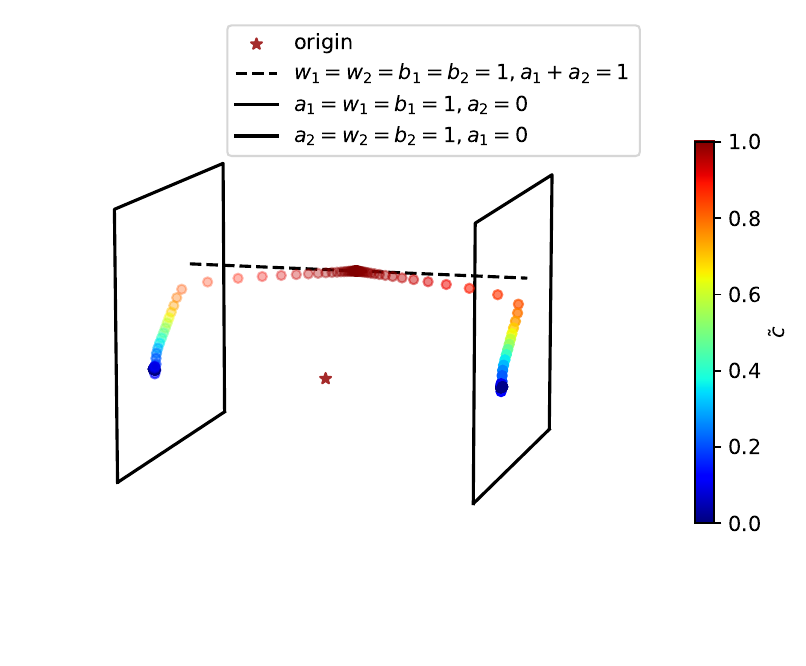}
\label{fig:w2}}
\subfigure[$w^3$]{
\includegraphics[width=0.315\linewidth]{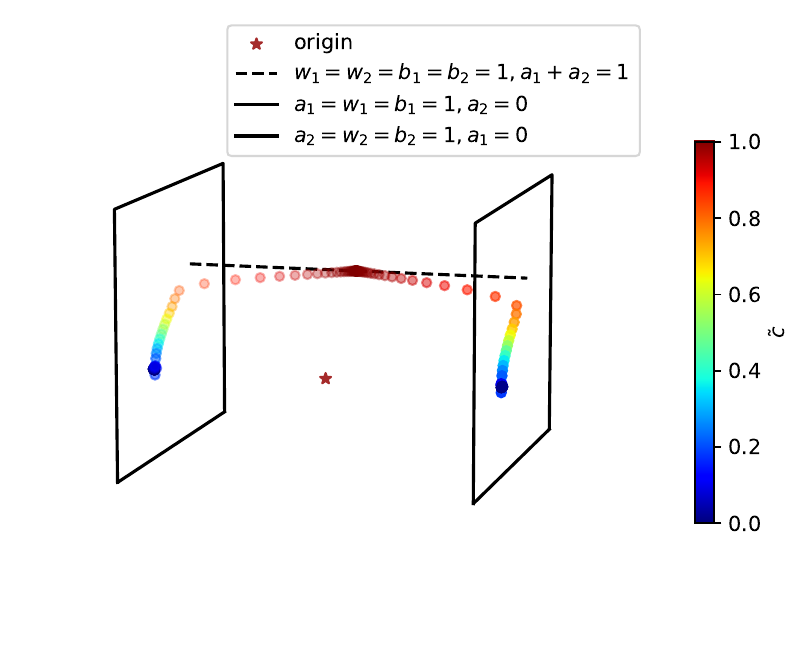}
\label{fig:w3}}
\caption{Training a two-layer neural network. The network model is $f_{\vtheta}(\vx)=a_1\tanh(\vw_1^{\top}\vx+b_1)+a_2\tanh(\vw_2^{\top}\vx+b_2)$, with $x \in \sR^3$. The target function is $f^*(\vx)=\tanh(\mathbf{1}^T\vx+1)$. The training data $(\vx_i,y_i)_{i=1}^n$ is obtained by $y_i=f^*(\vx_i)$ and $\{\vx_i\}_{i=1}^n$ draw independently from uniform distribution on interval $[-2,2]^d$. In this experiments, $m=2$,$d=3$,$n=10$. The iterations is $10^6$, the learning rate is $0.5$. The initialization scale is $10^{-20}$. Caption $w^i$ represents the $i$-th dimension of $\vw$.}
\label{high dimensional Q^*}
\end{figure}

\begin{figure}[htbp]
\centering
\includegraphics[width=0.6\linewidth]{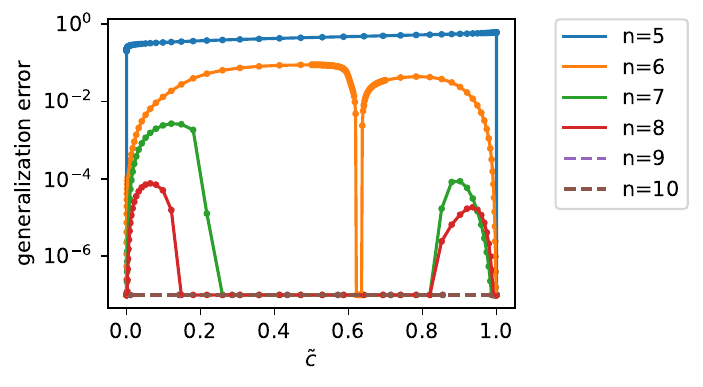}
\caption{Training a two-layer neural network. The network model is $f_{\vtheta}(\vx)=a_1\tanh(\vw_1^{\top}\vx+b_1)+a_2\tanh(\vw_2^{\top}\vx+b_2)$, with $x \in \sR^3$. The target function is $f^*(\vx)=\tanh(\mathbf{1}^T\vx+1)$. The training data $(\vx_i,y_i)_{i=1}^n$ is obtained by $y_i=f^*(\vx_i)$ and $\{\vx_i\}_{i=1}^n$ draw independently from uniform distribution on interval $[-2,2]^d$. In this experiments, $m=2$,$d=3$. The iterations is $10^6$ for $n=5,6,7,8$. The iterations is $2\times 10^7$ for $n=9,10$. The learning rate is $0.25$. The initialization scale is $10^{-20}$. The generalization error is obtained by assessing $1000$ points drawn independently from uniform distribution on interval $[-2,2]^d$. We identify generalization error lower than $10^{-7}$ to be $10^{-7}$, and regard it as successful recovery of the target function.}
\label{high dimensional, c and genloss}
\end{figure}

\begin{figure}[htbp]
\centering
\subfigure[$n=6$]{
\includegraphics[width=0.315\linewidth]{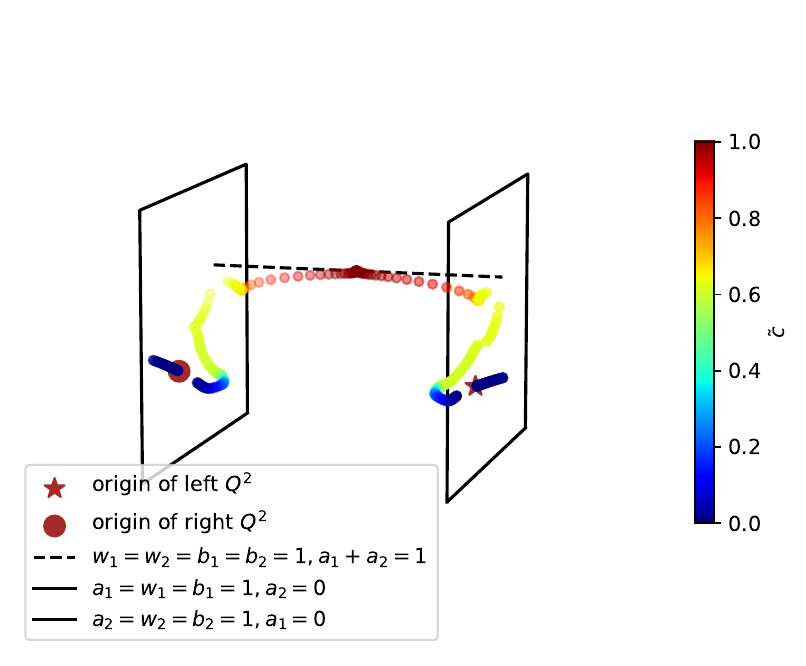}
\label{fig:n6}}
\subfigure[$n=7$]{
\includegraphics[width=0.315\linewidth]{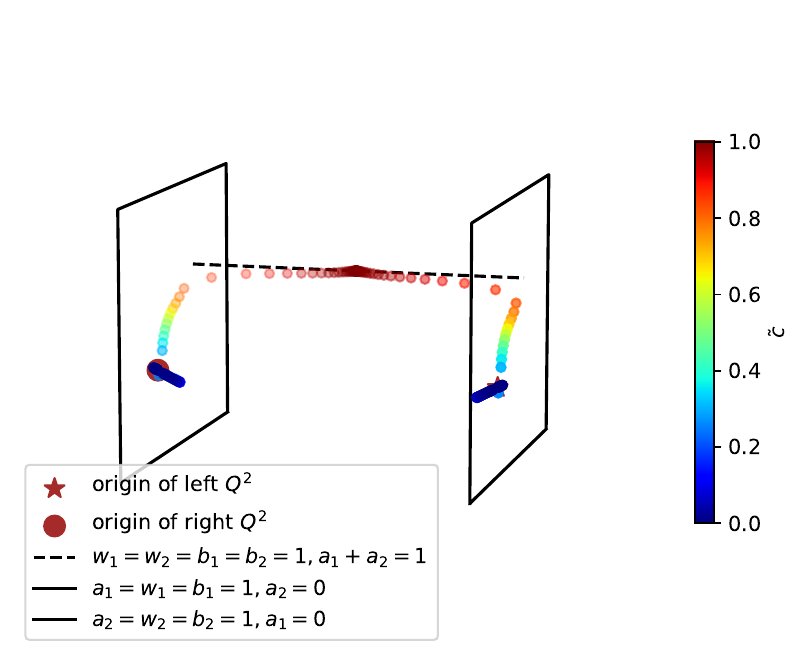}
\label{fig:n7}}
\subfigure[$n=8$]{
\includegraphics[width=0.315\linewidth]{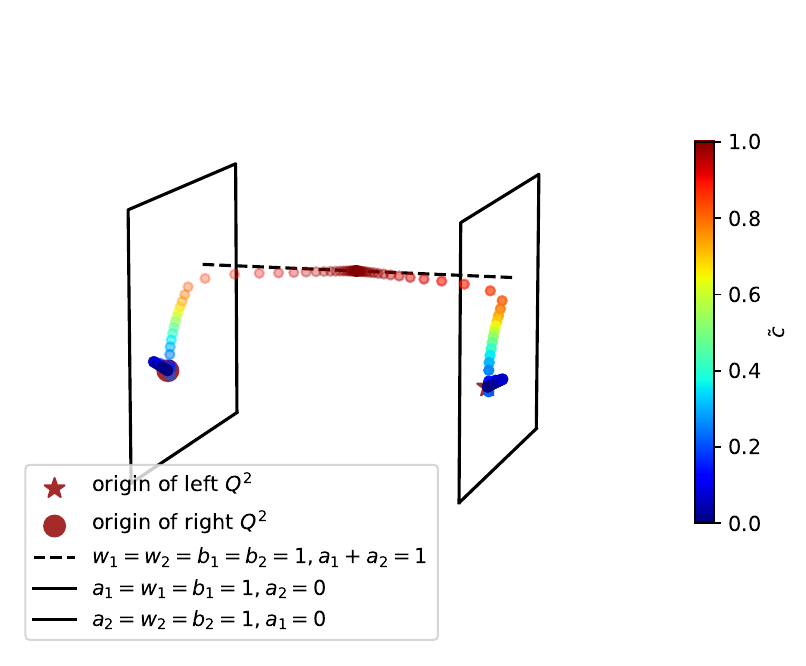}
\label{fig:n8}}
\caption{The experiment is same with \ref{high dimensional, c and genloss}. In this figure, the $w_1$ is first coordinate of $\vw_1$, $w_2$ is the first coordinate of $\vw_2$.}
\label{high dimension, converge of different n}
\end{figure}

\section{Experimental Details}
\label{sec:experimental setup}
In Figure~\ref{n=2} to ~\ref{n=6}, the learning rate was set to $0.5$. Points for calculating generalization error were $1000$ points evenly from the interval $[-2, 2]$. For Figures~\ref{n=5} and \ref{n=6}, due to nonlinear convergence of seeds, it is difficult to get a extremely low training loss. So we train the network until loss is $10^{-8}$. For For Figures~\ref{n=2} to ~\ref{n=6}, we train the network until training loss is $10^{-15}$. In Figure~\ref{scale2}, the generalization error is computed by $1000$ points $(x_i,y_i)_{i=1}^{1000}$ with $y_i=f^*(x_i)$ and $\{x_i\}_{i=1}^{1000}$ following i.i.d standard Gaussian distribution.  For $n=2,3$, the iterations is $10^6$. For $n=4,5$, the iterations is $4\times10^5$. The training was halted once the loss reached $10^{-15}$.

In Figure~\ref{effect of c}, training was performed using gradient descent with a learning rate of $0.01$. Suppose $\vtheta_1$ and $\vtheta_2$ to be the initialization of the first and the second neuron, respectively. We transform $\vtheta_2$ into $a\vtheta_2$, and choose appropriate value of $a$ to keep $c=0.5$ across all trials.  

In Figure~\ref{c and converging}, Training was conducted using gradient descent with a learning rate of $0.05$ and iterations of $10^6$. The dataset $(x_i,y_i)_{i=1}^n$ consists of 6 points, with $y_i=f^*(x_i)$ and $\{x_i\}_{i=1}^6$ equally spaced points on the interval $[-2,2]$. We use seeds $0$ to $400$ to generate initialization of parameters.


In Figure~\ref{c and genloss}, gradient descent was employed as the training algorithm with a learning rate of $0.5$. For $n=2,3,4$, in all experiments training loss reaches $10^{-15}$ and then  training stops. For $n=5,6$, the network is trained with iterations $10^7$. The train loss is shown in Figure~\ref{c and trainloss}.

\begin{figure}[htbp]
\centering
\subfigure[$n=5$]{
\includegraphics[width=0.4\linewidth]{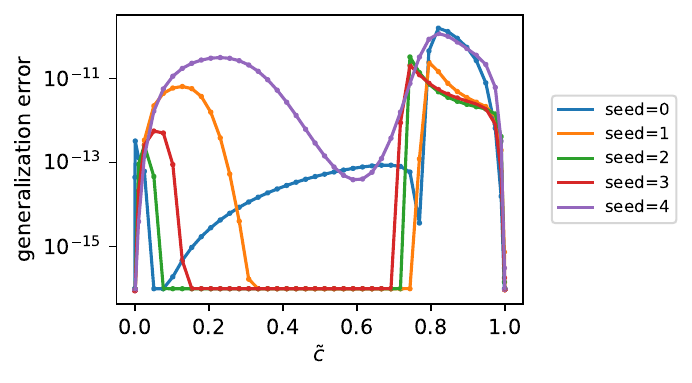}
\label{n=5,c and trainloss}}
\subfigure[$n=6$]{
\includegraphics[width=0.4\linewidth]{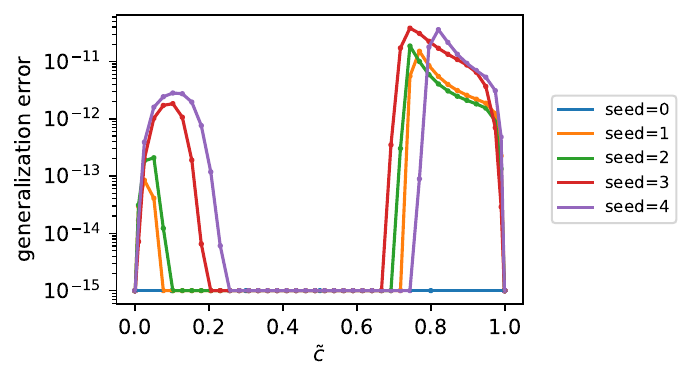}
\label{n=6,c and trainloss}}
\caption{For $n=5$, all networks are trained with learning rate $0.5$ and iterations $10^7$. The training stops when loss reaches $10^{-15}$.  For $n=6$, networks with seed$=1$ to seed$=4$ are trained with learning rate $0.5$ and iterations $2\times10^6$. The training stops when loss reaches $10^{-15}$. Networks with seed$=0$ are trained until loss reaches $10^{-15}$.
}
\label{c and trainloss}
\end{figure}

In Figure~\ref{multi neuron, effect of c}, gradient descent with a learning rate of $0.01$ is used for training. The iterations is $10^5$.
Suppose $\vtheta_1$ and $\vtheta_2$ to be the initialization of the first and the second neuron, respectively. We transform $\vtheta_2$ into $a\vtheta_2$, and choose appropriate value of $a$ to keep $c=0.5$ across all trials.

In Figure~\ref{multi neuron recovery}, gradient descent with a learning rate of $0.01$ is used, and the initial weights are drawn from a Gaussian distribution with a mean of $0$ and a standard deviation of $10^{-20}$. In all experiments, train loss reaches $10^{-15}$. For all width, we use $0$ as random seed to generate Gaussian distribution for the initialization of parameters.

The details of experiments of Figure~\ref{high dimensional loss and parameters} to Figure~\ref{high dimension, converge of different n} are in their captions.

\end{document}